\theoremstyle{plain}
\newtheorem{theorem}{Theorem}[section]
\newtheorem{lemma}[theorem]{Lemma}
\theoremstyle{definition}
\newtheorem{definition}[theorem]{Definition}
\theoremstyle{remark}
\icmltitlerunning{Robustifying Fourier Features Embeddings for Implicit Neural Representations}
\begin{document}
\twocolumn[
\icmltitle{Robustifying Fourier Features Embeddings for Implicit Neural Representations}

\icmlsetsymbol{equal}{*}

\begin{icmlauthorlist}
\icmlauthor{Mingze Ma}{UT}
\icmlauthor{Qingtian Zhu}{UT}
\icmlauthor{Yifan Zhan}{UT}
\icmlauthor{Zhengwei Yin}{UT}
\icmlauthor{Hongjun Wang}{UT}
\icmlauthor{Yinqiang Zheng}{UT}
\end{icmlauthorlist}

\icmlaffiliation{UT}{Department of Information Science and Technology, The University of Tokyo, Tokyo, Japan}

\icmlcorrespondingauthor{Mingze Ma}{ma-mingze156@g.ecc.u-tokyo.ac.jp}

\icmlkeywords{Machine Learning, ICML}

\vskip 0.3in
]



\printAffiliationsAndNotice{}
\begin{abstract}
Implicit Neural Representations (INRs) employ neural networks to represent continuous functions by mapping coordinates to the corresponding values of the target function, with applications e.g., inverse graphics. 
However, INRs face a challenge known as spectral bias when dealing with scenes containing varying frequencies.
To overcome spectral bias, the most common approach is the Fourier features-based methods such as positional encoding.
However, Fourier features-based methods will introduce noise to output, which degrades their performances when applied to downstream tasks.
In response, this paper initially hypothesizes that combining multi-layer perceptrons (MLPs) with Fourier feature embeddings mutually enhances their strengths, yet simultaneously introduces limitations inherent in Fourier feature embeddings.
By presenting a simple theorem, we validate our hypothesis, which serves as a foundation for the design of our solution.
Leveraging these insights, we propose the use of multi-layer perceptrons (MLPs) without additive terms (referred to as bias-free MLPs) as adaptive linear filters. These bias-free MLPs locally suppress unnecessary frequencies while enriching embedding frequencies, which theoretically reduces the lower bound of the loss of the MLPs.
Additionally, we propose a line-search-based algorithm to adjust the filter's learning rate dynamically, achieving a balance between the adaptive linear filter module and the INRs which further promote the performance.
Extensive experiments demonstrate that our proposed method consistently improves the performance of INRs on typical tasks, including image regression, 3D shape regression, and inverse graphics.
\end{abstract}



\section{Introduction}
Implicit Neural Representations (INRs), which fit the target function using only input coordinates, have recently gained significant attention.
By leveraging the powerful fitting capability of Multilayer Perceptrons (MLPs), INRs can implicitly represent the target function without requiring their analytical expressions. 
The versatility of MLPs allows INRs to be applied in various fields, including inverse graphics~\citep{mildenhall2021nerf, barron2023zip, martin2021nerf}, image super-resolution~\citep{chen2021learning, yuan2022sobolev, gao2023implicit}, 
image generation~\citep{skorokhodov2021adversarial}, and more~\citep{chen2021nerv, strumpler2022implicit, shue20233d}.
\begin{figure}
    \includegraphics[width=0.5\textwidth]{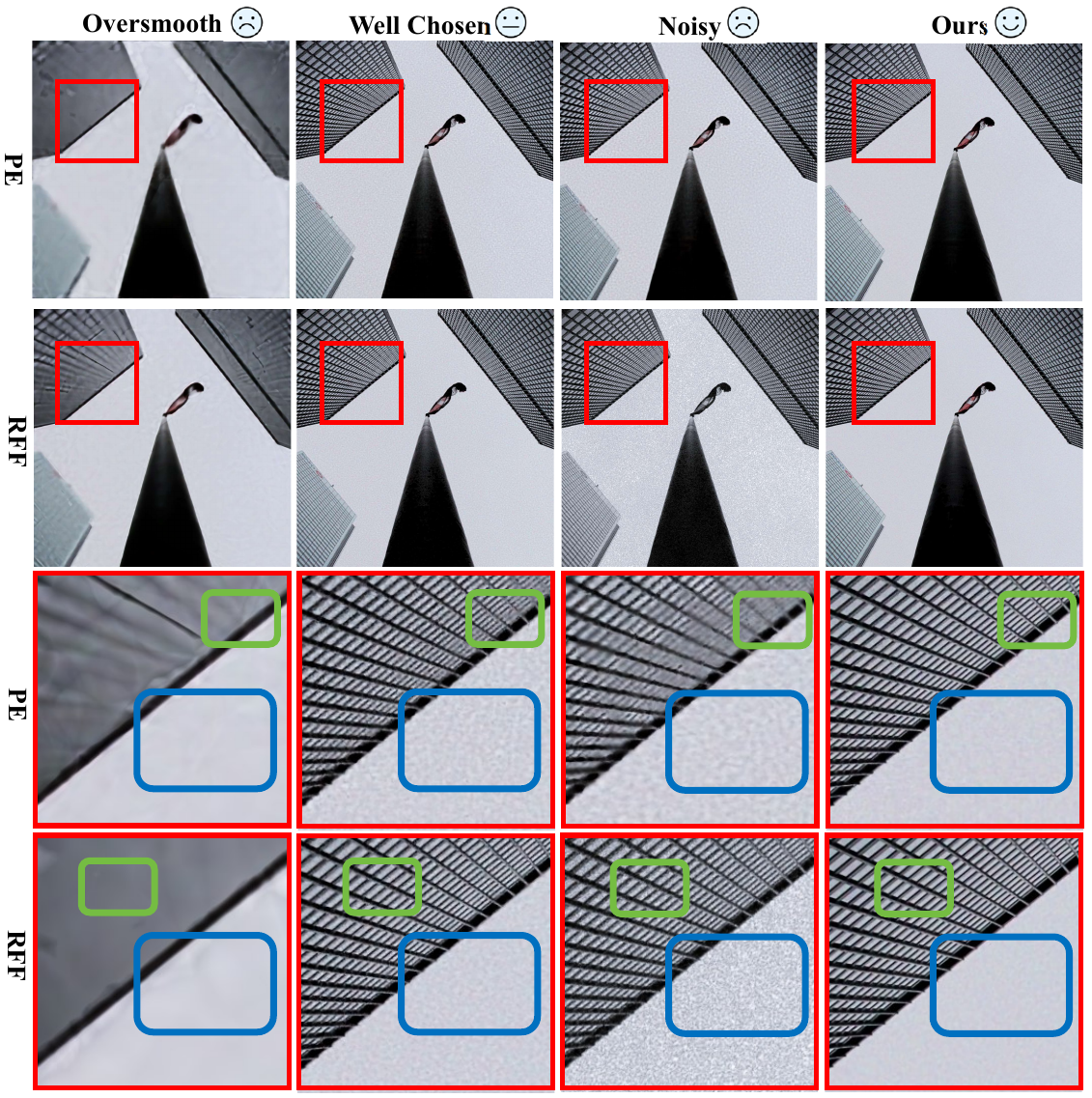}
    \caption{As illustrated at the circled blue regions and green regions, it can be observed that even with well-chosen standard deviation/scale, as experimented in \autoref{figure:combined}, the results are still unsatisfactory. However, using our proposed method, the noise is significantly alleviated while further enhancing the high-frequency details.}
    \label{fig:var}
    \vspace{-10pt}
\end{figure}

\begin{figure*}[!ht]
    \centering
    \begin{minipage}[b]{0.25\textwidth}
        \centering
        \includegraphics[width=1.\textwidth]{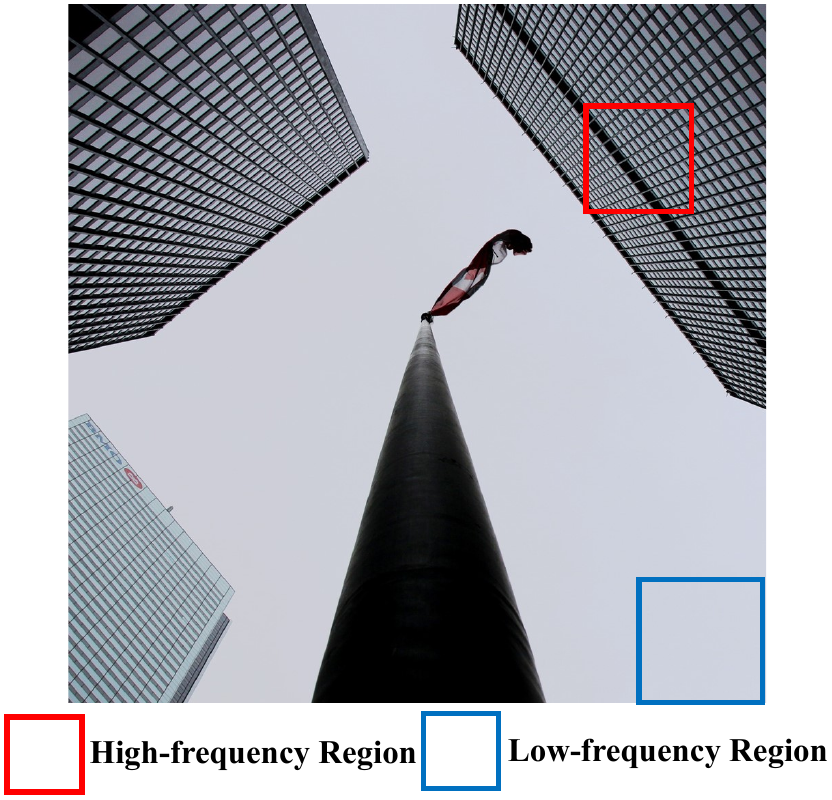} 
        \label{figure:small_image}
        \vspace{-20pt}
    \end{minipage}%
    \hfill
    \begin{minipage}[b]{0.75\textwidth}
        \centering
        \includegraphics[width=1.\textwidth]{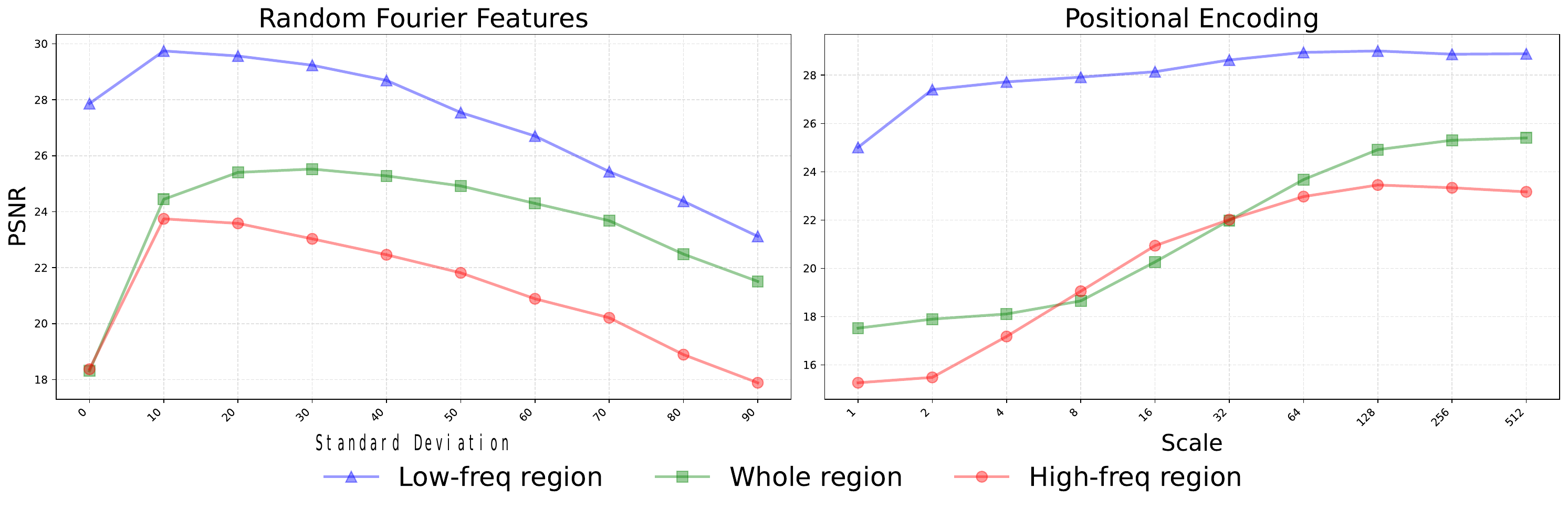} 
        \vspace{-20pt}
        \label{figure:large_image}
        
    \end{minipage}
    \caption{We test the performance of MLPs with Random Fourier Features (RFF) and MLPs with Positional Encoding (PE) on a 1024-resolution image to better distinguish between high- and low-frequency regions, as demonstrated on the left-hand side of this figure. We find that the performance of MLPs+RFF degrades rapidly with increasing standard deviation compared with MLPs+PE. Since positional encoding is deterministic, scale=512 can be considered to have standard deviation around 121.}
    \label{figure:combined}
    \vspace{-10pt}
\end{figure*}
Varying the sampling standard deviation/scale may lead to degradation results, as shown in \autoref{figure:combined}.
However, MLPs face a significant challenge known as the spectral bias, where low-frequency signals are typically favored during training~\citep{rahaman2019spectral}. 
A common solution is to map coordinates into the frequency domain using Fourier features, such as Random Fourier Features and Positional Encoding, which can be understood as manually set high-frequency correspondence prior to accelerating the learning of high-frequency targets.~\citep{tancik2020fourier}. 
This embeddings widely applied to the INRs for novel view synthesis~\citep{mildenhall2021nerf,barron2021mip}, dynamic scene reconstruction~\citep{pumarola2021d}, object tracking~\citep{wang2023tracking}, and medical imaging~\citep{corona2022mednerf}.

Although many INRs' downstream application scenarios use this encoding type, it has certain limitations when applied to specific tasks.
It depends heavily on two key hyperparameters: the sampling standard deviation/scale (available sampling range of frequencies) and the number of samples.
Even with a proper choice of sampling standard deviation/scale, the output remains unsatisfactory, as shown in \autoref{fig:var}: Noisy low-frequency regions and degraded high-frequency regions persist with well chosen sampling standard deviation/scale with the grid-searched standard deviation/scale, which may potentially affect the performance of the downstream applications resulting in noisy or coarse output.
However, limited research has contributed to explaining the reason and finding a proper frequency embeddings for input~\citep{landgraf2022pins, yuce2022structured}.

In this paper, we aim to offer a potential explanation for the high-frequency noise and propose an effective solution to the inherent drawbacks of Fourier feature embeddings for INRs.
Firstly, we hypothesize that the noisy output arises from the interaction between Fourier feature embeddings and multi-layer perceptrons (MLPs). We argue that these two elements can enhance each other's representation capabilities when combined. However, this combination also introduces the inherent properties of the Fourier series into the MLPs.
To support our hypothesis, we propose a simple theorem stating that the unsampled frequency components of the embeddings establish a lower bound on the expected performance. This underpins our hypothesis, as the primary fitting error in finitely sampled Fourier series originates from these unsampled frequencies.

Inspired by the analysis of noisy output and the properties of Fourier series expansion, we propose an approach to address this issue by enabling INRs to adaptively filter out unnecessary high-frequency components in low-frequency regions while enriching the input frequencies of the embeddings if possible.
To achieve this, we employ bias-free (additive term-free) MLPs. These MLPs function as adaptive linear filters due to their strictly linear and scale-invariant properties~\citep{mohan2019robust}, which preserves the input pattern through each activation layer and potentially enhances the expressive capability of the embeddings.
Moreover, by viewing the learning rate of the proposed filter and INRs as a dynamically balancing problem, we introduce a custom line-search algorithm to adjust the learning rate during training. This algorithm tackles an optimization problem to approximate a global minimum solution. Integrating these approaches leads to significant performance improvements in both low-frequency and high-frequency regions, as demonstrated in the comparison shown in \autoref{fig:var}.
Finally, to evaluate the performance of the proposed method, we test it on various INRs tasks and compare it with state-of-the-art models, including BACON~\citep{lindell2022bacon}, SIREN~\citep{sitzmann2020implicit}, GAUSS~\citep{ramasinghe2022beyond} and WIRE~\citep{saragadam2023wire}. 
The experimental results prove that our approach enables MLPs to capture finer details via Fourier Features while effectively reducing high-frequency noise without causing oversmoothness.
To summarize, the following are the main contributions of this work:
\begin{itemize}
    \item From the perspective of Fourier features embeddings and MLPs, we hypothesize that the representation capacity of their combination is also the combination of their strengths and limitations. A simple lemma offers partial validation of this hypothesis.

    \item  We propose a method that employs a bias-free MLP as an adaptive linear filter to suppress unnecessary high frequencies. Additionally, a custom line-search algorithm is introduced to dynamically optimize the learning rate, achieving a balance between the filter and INRs modules.

    \item To validate our approach, we conduct extensive experiments across a variety of tasks, including image regression, 3D shape regression, and inverse graphics. These experiments demonstrate the effectiveness of our method in significantly reducing noisy outputs while avoiding the common issue of excessive smoothing.
\end{itemize}

\section{Related works}
Implicit Neural Representations are designed to learn continuous representations of target functions by taking advantages of the approximation power of neural networks.
Their inherent continuous property can beneficial in many cases like video compression~\citep{chen2021nerv,strumpler2022implicit}, 3D modeling~\citep{park2019deepsdf,atzmon2020sal,9010266,gropp2020implicit,sitzmann2019scene} and volume rendering~\citep{pumarola2021d, barron2021mip,martin2021nerf,barron2023zip}.
However, simply employing MLPs may result in spectral bias, where oversmoothed outputs are generated due to the inherent tendency of MLPs to prioritize learning low-frequency components first. Consequently, many studies have focused on these drawbacks and explored various methods to address this issue.
The most straightforward way to address this issue is by projecting the coordinates into the higher dimension~\citep{tancik2020fourier, wang2021spline}.
However, these methods can lead to noisy outputs if there is a mismatch in the embeddings variance.
To address this, \citet{landgraf2022pins} propose dividing the Random Fourier Features into multiple levels of detail, allowing the MLPs to disregard unnecessary high-frequency components. Another type of approach to mitigating the spectral bias introduced by the ReLU activation function, as proposed by \citet{sitzmann2020implicit}, \citet{ramasinghe2022beyond}, \citet{saragadam2023wire}, and \citet{shenouda2024relus}, is to modify the activation function itself by using alternatives such as the Sine function, Wavelets, or a combination of ReLU with other functions. There are also efforts to modify network structures to mitigate spectral bias~\citep{mujkanovic2024neural}. 
\citet{lindell2022bacon} introduce a network design that treats MLPs as filters applied to the input of the next layer, known as Multiplicative Filter Networks (MFNs). 
Additionally, based on the discrete nature of signals like images and videos, grid-based approaches (e.g., Grid Tangent Kernel~\citep{zhao2024grounding}, DINER~\citep{xie2023diner}, and Fourier Filter Bank~\citep{wu2023neural}) have been proposed to address spectral bias, as the grid property allows for sharp changes in features, which facilitates learning fine details.
Even though, there are some prior works trying to solve the inherent problems of Fourier features embeddings ~\citep{landgraf2022pins, yuce2022structured, hertz2021sape, saratchandran2024sampling}, limited research has addressed both the underlying causes of high-frequency noise and provides a non-heuristic solution even if these embeddings are widely employed into many downstream tasks.
\section{Preliminary}
\begin{figure*}[!ht]
    \centering
    \includegraphics[width=0.75\textwidth]{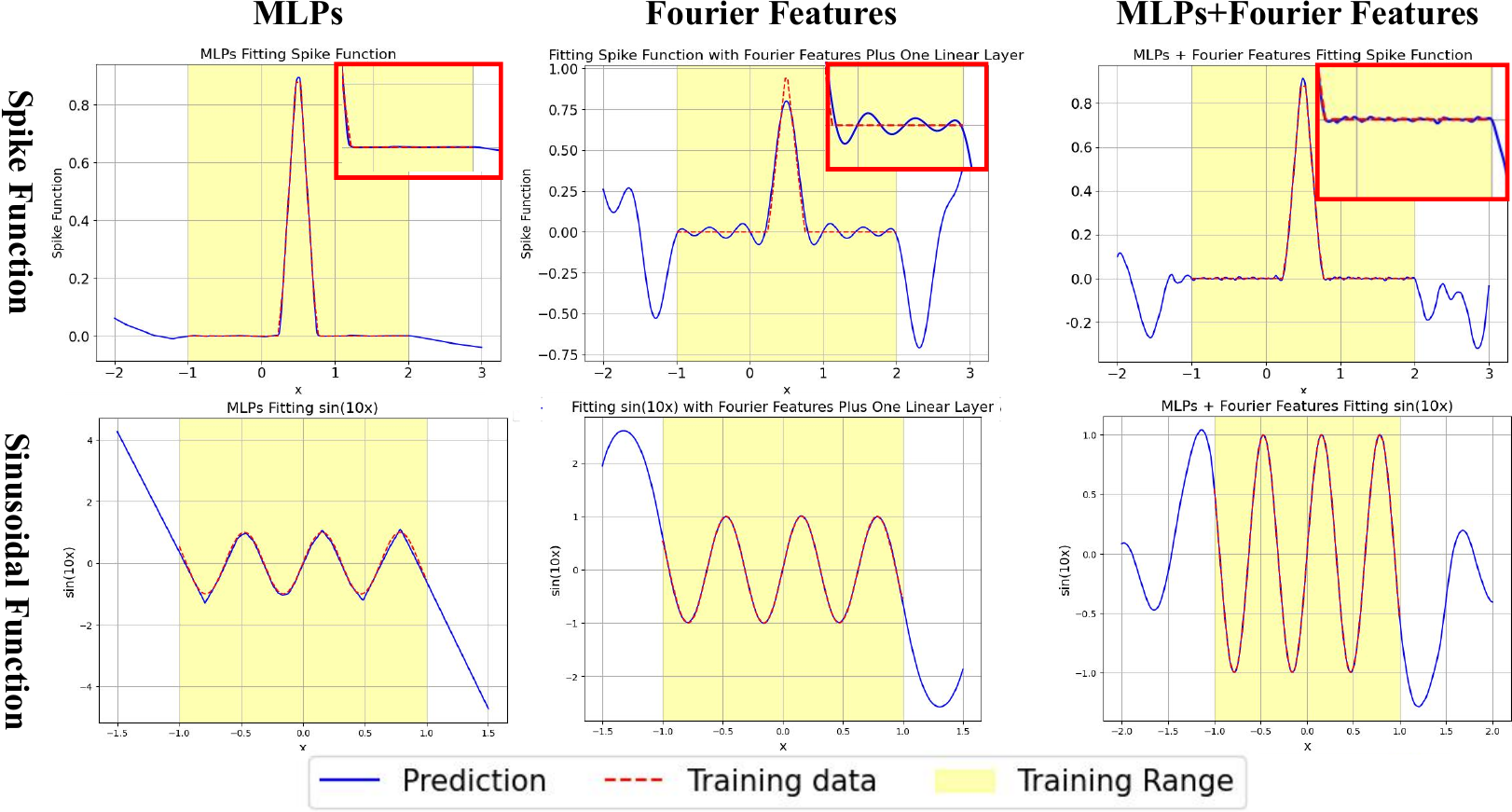}
    \caption{We demonstrate our hypothesis by using three models (MLPs, Fourier features with one linear layer and their combination (i.e. MLPs with Fourier features embeddings)) to fit two kinds of functions. The result demonstrate that combining MLPs with Fourier Features can actually combine their representation capability. These highlighted red boxes demonstrate that MLPs with Fourier features also involve the representation capability of the Fourier features where there are high-frequency fluctuations in the flat regions due to the non-differentiable point in spike function.}
    \label{fig:fourier_series} 
\end{figure*}
\subsection{Fourier Features Embeddings}
\label{sec:FF}
Fourier features embeddings are common embedding methods to alleviate spectral bias. As a type of embedding that maps inputs into the frequency domain, they can be expressed by the function $\gamma(\mathbf{v}): \mathbb{R}^d\rightarrow\mathbb{R}^N$, where \(d\) is the input coordinate dimension and \(N\) is the embedding dimension. The two most common types are Random Fourier Features (RFF) and Positional Encoding (PE), which can both be represented by a single formula with slight variations in their implementation.
\begin{definition}
[$\mathbf{Fourier\, features}$] $\mathbf{Fourier\, features}$ can be generally defined as a function such that $\gamma(\mathbf{v}): \mathbb{R}^d\rightarrow\mathbb{R}^N$
\begin{equation}
\begin{split}
\gamma(\mathbf{v}) = &[sin(2\pi\mathbf{b}_i^{\top}\mathbf{v}), cos(2\pi \mathbf{b}_i^{\top}\mathbf{v})]_{i\in[N]}\\
&[N] = \{1, 2, 3,\cdots, N\},\,\mathbf{b}_i \in\mathbb{R}^{d\times 1}\\
\end{split}
\end{equation}
\,\,\,\,\,\,\,$\mathbf{Positional\, Encoding}$: $\mathbf{b}_i=\mathbf{s}^{\frac{i}{N}}, \, for\, i\in[N]$. It applies log-linearly spaced frequencies for each dimension, with the scale $\mathbf{s}$ and size of embedding \(N\) as hyperparameters, and includes only on-axis frequencies.

\,\,\,\,\,\,\,$\mathbf{Random\, Fourier\, Features}$: $\mathbf{b}_i\sim\mathcal{N}(0, \Sigma)$. Typically, this is an isotropic Gaussian distribution, meaning that $\Sigma$ has only diagonal entries. Other distributions, such as the Uniform distribution, can also be used, though the Gaussian distribution remains the most common choice.
\label{def:ff}
\end{definition}
\subsection{Bias-free ReLU MLPs}
\label{section:BF}
For tasks with identical input and output dimensions, additive term-free ReLU MLPs (also referred to as bias-free MLPs) can, in contrast to standard MLPs, be regarded as locally strictly linear operators (Lemma~\ref{lemma:BF1}) with a scale-invariant property (Lemma~\ref{lemma:BF2}), as demonstrated in the following lemmas.

\begin{lemma}
    (\citet{mohan2019robust}) For a Bias-free ReLU activation function ($\sigma(\cdot)$) MLPs $f_{BF}(\cdot): \mathbb{R}^N\rightarrow\mathbb{R}^N$ with L layers, matrix at each layer is denoted as $W^l$ for $l=1,\cdots,L$. Then, the MLPs can be written as $f_{BF}(\mathbf{x})=\mathbf{A}_\mathbf{x}\mathbf{x}$.
\label{lemma:BF1}
\end{lemma}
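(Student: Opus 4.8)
The plan is to exploit the piecewise-linear structure of ReLU together with the absence of bias terms, which makes the entire network positively homogeneous and hence \emph{linear through the origin} on each activation region. The central observation is that the elementwise ReLU $\sigma(\cdot)$ applied to any pre-activation vector $\mathbf{z}$ can be rewritten as a matrix multiplication $\sigma(\mathbf{z}) = D(\mathbf{z})\,\mathbf{z}$, where $D(\mathbf{z}) = \mathrm{diag}(\mathbf{1}[z_1>0],\dots,\mathbf{1}[z_N>0])$ is a diagonal $0/1$ matrix encoding the sign pattern (the active set) of $\mathbf{z}$. Note that $D(\mathbf{z})$ depends on $\mathbf{z}$, but for a fixed $\mathbf{z}$ it is a constant matrix.

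First I would set up the forward pass layer by layer. Writing $\mathbf{z}^1 = W^1\mathbf{x}$ for the first pre-activation and $\mathbf{a}^1 = \sigma(\mathbf{z}^1) = D^1 \mathbf{z}^1$ with $D^1 := D(\mathbf{z}^1)$, and then recursively $\mathbf{z}^{l} = W^{l}\mathbf{a}^{l-1}$, $\mathbf{a}^{l} = D^{l}\mathbf{z}^{l}$ with $D^l := D(\mathbf{z}^l)$, the network (with the standard linear output layer) evaluates to $f_{BF}(\mathbf{x}) = W^L D^{L-1} W^{L-1} \cdots D^1 W^1 \mathbf{x}$. Since every $D^l$ is determined by the running activations, and those are in turn determined by $\mathbf{x}$, the product $\mathbf{A}_\mathbf{x} := W^L D^{L-1} W^{L-1}\cdots D^1 W^1$ is a matrix that depends only on $\mathbf{x}$, and we obtain $f_{BF}(\mathbf{x}) = \mathbf{A}_\mathbf{x}\mathbf{x}$ as claimed. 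A short induction on $l$ formalises that the partial composition up to layer $l$ equals a product of the $W$'s and $D$'s applied to $\mathbf{x}$.

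The step that actually carries the content — and where the bias-free hypothesis is essential — is the final collection of terms into a \emph{linear} (not merely affine) operator. If biases were present, each layer would contribute $\sigma(W^l\mathbf{a}^{l-1}+\mathbf{b}^l) = D^l(W^l\mathbf{a}^{l-1}+\mathbf{b}^l)$, and the $\mathbf{b}^l$ terms would propagate into an additive offset, yielding an affine map $\mathbf{A}_\mathbf{x}\mathbf{x}+\mathbf{c}_\mathbf{x}$ rather than the pure product $\mathbf{A}_\mathbf{x}\mathbf{x}$. Removing the biases is exactly what annihilates $\mathbf{c}_\mathbf{x}$, so I expect the only subtlety to be bookkeeping: tracking that each $D^l$ is well defined from the forward pass and emphasising that $\mathbf{A}_\mathbf{x}$ is \emph{input-dependent}, i.e.\ locally constant on each ReLU region but varying across the domain. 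This local-linearity view is precisely what sets up the scale-invariance statement of the following lemma, since the positive homogeneity gives $\mathbf{A}_{c\mathbf{x}}=\mathbf{A}_\mathbf{x}$ for $c>0$ (the sign pattern, and hence each $D^l$, is unchanged under positive scaling of $\mathbf{x}$).
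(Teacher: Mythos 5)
Your proof is correct and is essentially the canonical argument: the paper itself does not reproduce a proof for this lemma (it is stated as a citation to \citet{mohan2019robust}), and the argument in that reference is exactly your factorization $f_{BF}(\mathbf{x}) = W^L D^{L-1} W^{L-1}\cdots D^1 W^1\,\mathbf{x}$ with each $D^l$ the input-dependent $0/1$ diagonal matrix encoding the ReLU activation pattern. Your closing remarks — that biases would produce an affine offset $\mathbf{c}_\mathbf{x}$, and that positive scaling preserves the sign patterns so $\mathbf{A}_{c\mathbf{x}}=\mathbf{A}_\mathbf{x}$ for $c>0$ — also match how the paper uses this lemma (in Section 3.2 and Lemma~\ref{lemma:BF2}).
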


\begin{lemma}
(\citet{mohan2019robust}) For a Bias-free ReLU activation function ($\sigma(\cdot)$) MLPs $f_{BF}(\cdot): \mathbb{R}^N\rightarrow\mathbb{R}^N$ with L layers, matrix at each layer is denoted as $W^l$ for $l=1,\cdots,L$.  For any input $\mathbf{x}$ and any nonnegative constant $\alpha$,
\begin{equation}
f_{BF}(\alpha \mathbf{x}) = \alpha f_{BF}(\mathbf{x}).
\end{equation}
\label{lemma:BF2}
\vspace{-15pt}
\end{lemma}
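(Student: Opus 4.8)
The plan is to prove the identity by observing that a bias-free ReLU MLP is a composition of just two kinds of elementary maps---multiplication by a fixed matrix, $\mathbf{h}\mapsto W^l\mathbf{h}$, and the coordinatewise ReLU, $\mathbf{h}\mapsto\sigma(\mathbf{h})$---and that each of these commutes with multiplication by a nonnegative scalar. The only nontrivial structural fact I would isolate first is the positive homogeneity of the nonlinearity: for any $\alpha\geq 0$ and any real vector $\mathbf{z}$, the map $\sigma(t)=\max(0,t)$ satisfies $\sigma(\alpha\mathbf{z})=\alpha\,\sigma(\mathbf{z})$, because multiplying by a nonnegative scalar never flips the sign of a coordinate, so the thresholding at zero and the scaling commute. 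This is precisely the place where the hypothesis $\alpha\geq 0$ is consumed, and it cannot be dropped. Matrix multiplication, being linear, trivially satisfies $W^l(\alpha\mathbf{h})=\alpha\,W^l\mathbf{h}$; crucially, the absence of additive bias terms is what keeps each affine layer genuinely linear rather than merely affine.

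Having recorded these two facts, I would write $f_{BF}$ explicitly as a composition of $L$ such layers and argue by induction on the layer index that the vector carried through the network scales linearly in $\alpha$. For the base case, linearity of $W^1$ gives $W^1(\alpha\mathbf{x})=\alpha\,W^1\mathbf{x}$. For the inductive step, if the input to layer $l$ equals $\alpha$ times its unscaled counterpart, then the pre-activation $W^l(\cdot)$ inherits the factor $\alpha$ by linearity, and the post-activation $\sigma(\cdot)$ preserves it by the homogeneity established above. Iterating across all layers factors a single $\alpha$ out to the front, yielding $f_{BF}(\alpha\mathbf{x})=\alpha\,f_{BF}(\mathbf{x})$; the argument is identical whether or not the terminal layer carries an activation, since both constituent maps are compatible with nonnegative scaling.

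There is essentially no hard obstacle here: the entire content is bookkeeping to confirm that the two operations available at every layer are each compatible with nonnegative rescaling, the matrix by linearity and the ReLU by positive homogeneity. The one point requiring care is the sign restriction, since $\sigma(\alpha\mathbf{z})\neq\alpha\,\sigma(\mathbf{z})$ for $\alpha<0$ in general, so I would ensure the proof invokes $\alpha\geq 0$ exactly at the homogeneity step and nowhere assumes more. As a more compact alternative, I could appeal directly to Lemma~\ref{lemma:BF1}: since the induced matrix $\mathbf{A}_\mathbf{x}$ is determined solely by the ReLU activation patterns, and these patterns are invariant under positive rescaling of the input, one has $\mathbf{A}_{\alpha\mathbf{x}}=\mathbf{A}_\mathbf{x}$, whence $f_{BF}(\alpha\mathbf{x})=\mathbf{A}_{\alpha\mathbf{x}}(\alpha\mathbf{x})=\alpha\,\mathbf{A}_\mathbf{x}\mathbf{x}=\alpha\,f_{BF}(\mathbf{x})$. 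This reaches the same conclusion immediately but leans on the construction underlying Lemma~\ref{lemma:BF1}, whereas the layerwise induction is self-contained.
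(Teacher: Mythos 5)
Your proof is correct. Note that the paper itself gives no proof of this lemma at all---it is imported verbatim from \citet{mohan2019robust}---so there is nothing internal to compare against; your layerwise induction (linearity of each $W^l$ plus positive homogeneity of $\sigma$, with $\alpha\geq 0$ consumed exactly at the ReLU step) is the standard argument from that cited source, and your compact alternative via Lemma~\ref{lemma:BF1} is precisely the activation-pattern view the paper relies on elsewhere in Section~\ref{section:BF}. One small caution on the compact variant: at $\alpha=0$ the claim $\mathbf{A}_{\alpha\mathbf{x}}=\mathbf{A}_{\mathbf{x}}$ is not meaningful (the activation pattern at the zero input is degenerate), so that case should be dispatched separately by $f_{BF}(\mathbf{0})=\mathbf{0}$; your induction handles it with no special casing, which is a point in its favor.
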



To connect these properties with our method, intuitively, scaling the input by a constant should preserve the input's frequency. Since the goal for MLPs is to filter based on frequency pattern rather than amplitude of a signal for a singe coordinate, the scale-invariant property ensures that scaling does not affect the result. By incorporating an additive term in the MLPs, the network function is modified to \( f_{BF}(\mathbf{x}) = \mathbf{A}_\mathbf{x} \mathbf{x} + \mathbf{b}_\mathbf{x} \), can be shown in a similar proof to Lemma \ref{lemma:BF1}. However, this adjustment disrupts the scale-invariant property and alters the activation pattern, which is undesirable.

\section{Analysis and Motivations}
\begin{figure*}[!ht]
    \centering\includegraphics[page=1, width=1\textwidth]{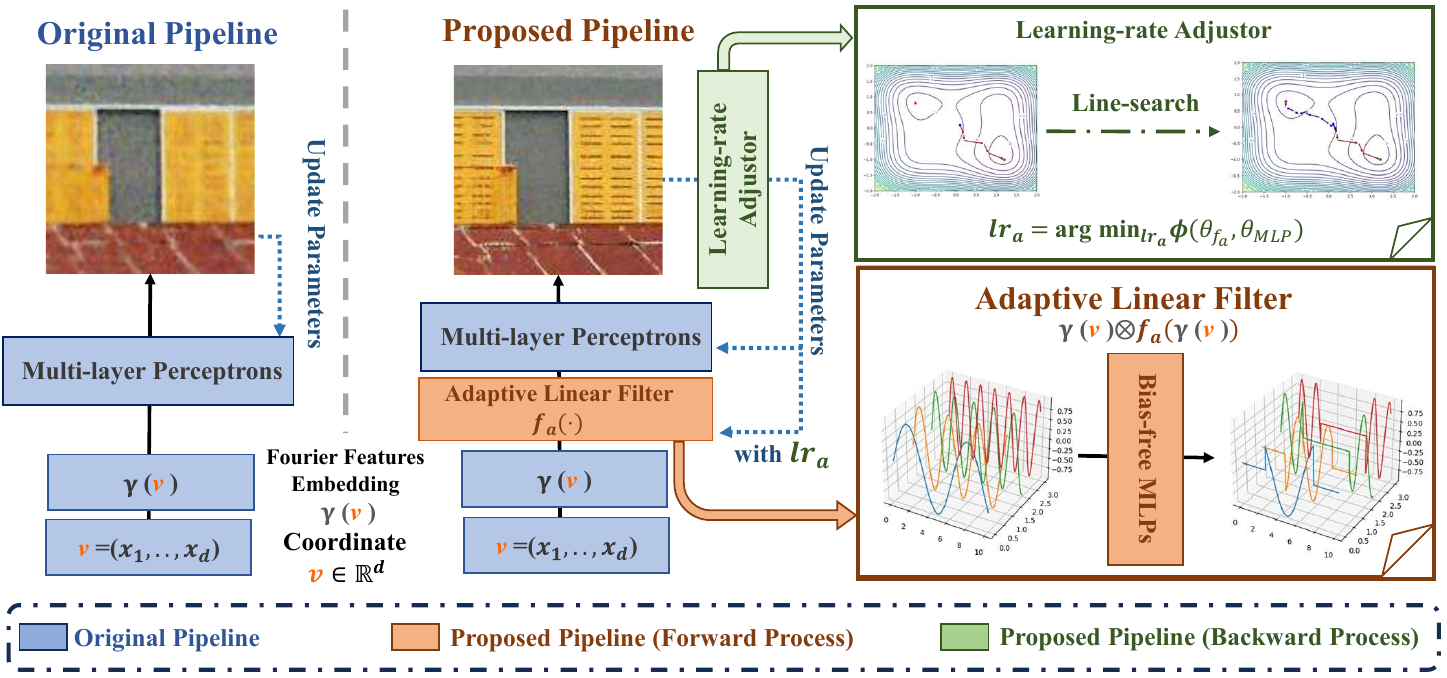}
    \caption{The pipeline of our method introduces two additional modules compared to the original approach. The first module, an adaptive linear filter, removes unnecessary frequency components at the pixel level, reducing high-frequency noise during regression. The second module dynamically adjusts the learning rate during training to optimize the approximated loss for the next step, achieving dynamical balance. Together, these modules result in cleaner and more detailed images.}
    \label{fig:T}
   \vspace{-10pt} 
\end{figure*}
It is widely known that the ReLU-activated function can form a spike function as shown in \autoref{fig:fourier_series} using two neurons in the hidden layer. This spike function forms a Shauder basis for the $\mathcal{C}[0,1]$ space, which means that that every continuous function can be uniquely decompose into a linear combination of countable infinitely many spike functions. In contrast, using Fourier features embeddings with a single linear layer can be recognized as the Fourier series to fit the target function which can also be considered as a Shauder basis in [0, 1] (since the function can be periodic with T=1).

Therefore, using three layers of MLPs (one input layer, one hidden layer, and one output layer) requires infinitely many neurons in hidden layers to fit a sinusoidal function. Since if we consider each two neurons as a spike function times a constant and fitting a sinusoidal function using spike function requires countable infinity of those functions, this is equivalent to using infinitely many neurons to perfectly fit a sinusoidal function. On the other hand, using the sinusoidal function to fit the spike function also requires infinitely many sinusoidal functions. This is because the Fourier transform of the spike function has infinite support, and we need infinitely many delta functions (the Fourier transform of sinusoidal functions) to fit the Fourier transform of the spike function.

By the above simple analysis, to understand the behaviour of the noisy output, we propose a hypothesis that combining both MLPs and Fourier features can be considered as combining their representation capacity as shown \autoref{fig:fourier_series} (also the convergence plot in \autoref{sec:spike}). This provides an explanation for the noisy output that discontinuous or non-differentiable property (usually from change of objects or change of color) of the image will require infinity many sinusoidal functions to suppress the high-frequencies components on the continuous regions. This can also be partially proved by considering the Neural Tangent Kernel theory. If MLPs plus Fourier features embeddings can be interpreted as the linear combination of sinusoidal functions (proved by the Lemma. \ref{lemma1}), then by simple deduction based on triangular inequality and Orthogonal Decomposition Theorem, the unselected frequencies by the Fourier features embeddings can form an lower bound for the theoretical performance (proved by Lemma. \ref{lemma:1}).
To avoid noises, one idea is to expand the frequency band and suppress the high-frequencies components on the flat regions which leads to our adaptive linear filter.

\section{Methods}

In this section, we present our solution grounded in the previous analysis. The proposed method has two main components: (i) an adaptive linear filter that automatically adjust the input embeddings which also potentially introduces more frequencies components, and (ii) a learning-rate adjustor that uses the line-search method during back-propagation to dynamically adjust the filter's learning rate. The full pipeline is illustrated in \autoref{fig:T}.
\subsection{Bias-Free MLPs as Adaptive Linear Filter}
In section \ref{section:BF}, we have introduced the properties of bias-free MLPs. In this section, we will further explore the advantages of using bias-free MLPs as adaptive linear filters. Let $\gamma(\mathbf{v})$ represent the Fourier feature embeddings of the input $\mathbf{v}$, and let \( f_{a}(\cdot) \) denote the adaptive linear filter, implemented as a bias-free ReLU-activated MLP. Our adaptive linear filter, applied to the input embeddings $\gamma(\mathbf{v})$, can be defined as $f_{a}(\gamma(\mathbf{v}))\otimes\gamma(\mathbf{v})$, where $\otimes$ denotes the Hadamard product. In other words, we apply \( f_{a}(\cdot) \) to the embedding to obtain a linear filter, which is then applied to the original embeddings to modify them. The filtered result, \( f_{a}(\gamma(\mathbf{v})) \otimes \gamma(\mathbf{v}) \), will subsequently serve as the final filtered embeddings to the INRs.

Moreover, these bias-free MLPs not only preserve the input pattern after the activation function but also extend the range of embeddings' frequencies, which could theoretically lower the performance bound discussed in  Lemma. \ref{lemma:1}, in contrast to using a simple mask as the linear filter. Note that \( f_{a}(\cdot) \) is bias-free and can be expressed as \( f_{a}(\mathbf{x}) = \mathbf{A}_{\mathbf{x}} \mathbf{x} \), as established in Lemma \ref{lemma:BF1}. Substituting this into \( f_{a}(\gamma(\mathbf{v})) \otimes \gamma(\mathbf{v}) \) yields \( \mathbf{A}_{\gamma(\mathbf{v})} \gamma(\mathbf{v}) \otimes \gamma(\mathbf{v}) \). 
\begin{figure}[!b]
 \vspace{-5pt}
    \centering
    \includegraphics[width=0.5\textwidth]{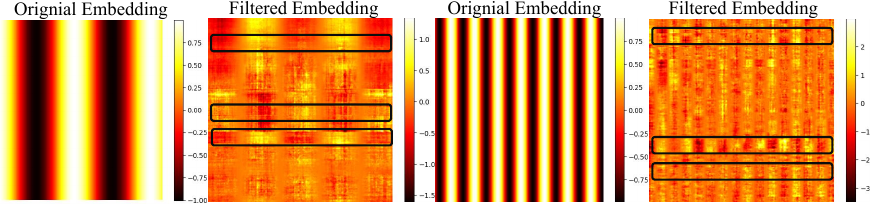}
    \vspace{-10pt}
    \caption{The black boxes highlight frequency bands with varying phases or frequencies, supporting our derivation that the adaptive linear filter can enrich the embeddings.
}
    \label{figure:fb}
   
\end{figure}
For simplicity, we consider the one-dimensional case, which can easily be extended to higher dimensions by following a similar deduction. Let \( a_{ij} \) represent an entry of \( \mathbf{A}_{\gamma(\mathbf{v})} \), and \( \sin(b_i v) \) denote the \( i^{\text{th}} \) entry of \( \gamma(v) \) without loss of generality. Since the Hadamard product performs entry-wise multiplication, the output for a channel with frequency \( b_i \) is given by \( o_i = \sum_j a_{ij} \sin(b_j v) \sin(b_i v) \). Using the trigonometric identity \( \sin(b_j v) \sin(b_i v) = \frac{1}{2} (\sin((b_j + b_i) v) + \sin((b_j - b_i) v)) \), \( f_a(\cdot) \) can adjust input frequencies for the INRs through \( a_{ij} \), enabling enriched frequency choices. It can also preserve the original embeddings by setting \( a_{ij} = \frac{1}{2N \sin(b_j v)} \), where \( N \) is the embedding length, or block specific frequencies by setting \( a_{ij} = 0 \) for those entries. This simple analysis supports the implementation of the adaptive linear filter. Notably, the above derivation remains valid for MLPs with bias terms; however, they may introduce more complex, non-sinusoidal patterns (Comparison of the performance will be demonstrated in Section \ref{section:BFExp}).
 The illustration (\autoref{figure:fb}) from our experiments with mixing frequencies in a single frequency channel verifying our deduction.

\subsection{Line-searched based optimization}

During experiments, we observed that varying initial learning rates for the adaptive linear filter and INRs resulted in different performance outcomes. Balancing their learning rates is crucial: if INRs learn too quickly, the system risks local minima, hindering the filter's performance. Conversely, a high learning rate for the filter can cause excessive input fluctuations, preventing INRs from converging. Inspired by \citet{hao2021adaptive}, we aim to optimize the learning rate of the adaptive linear filter. By optimizing the loss function $f(\theta_{fa}, \theta_{MLP})$ as $\phi(lr_a) = f(\theta_{fa},  \theta_{MLP})$ during training (where $\theta_{fa}$ represents the parameters of the adaptive linear filter, $\theta_{MLP}$ represents the parameters of the INRs, $lr_a$ and $lr_I$ represent the learning rates for the adaptive filter and INR, respectively), we calculate the learning rate $lr_a$ for the adaptive linear filter at each iteration.
By applying the Taylor expansion of the loss function, this optimization problem can be approximated as a linear optimization problem, which can be evaluated efficiently without extensive computation. The complete algorithm and derivation are provided in the supplementary material.

\section{Experiments}
To validate the proposed method, we test it across various tasks, including image regression, 3D shape regression, and inverse graphics. All experiments are performed on a single RTX 4090 GPU, using an adaptive linear filter with 3 layers, each with the same width as the number of channels in the Fourier features embedding.
\subsection{Image Regression on Kodak Dataset}
In this section, we evaluate the performance of our proposed method on the high resolution (512$\times$768 or 768$\times$512) Kodak Dataset~\citep{article}. All baselines are trained using the mean squared error (MSE) loss function.

We benchmark our approach against several state-of-the-art baselines, including Multi-Layer Perceptrons (MLP) with Positional Encoding, MLP with Random Fourier Features, SIREN~\citep{sitzmann2020implicit}, GAUSS~\citep{ramasinghe2022beyond}, and WIRE~\citep{saragadam2023wire}. Each model is trained for 20,000 iterations to ensure convergence and taken the highest performance as the final result, with all hyperparameters including learning rate, layers, $\omega$ and s for other activation functions, aligned with the official implementations of the baseline methods in WIRE~\footnote{\url{https://github.com/vishwa91/wire.git}}. 

For the custom line-search algorithm employed in our method, we configure the maximum learning rate to $1 \times 10^{-3}$, with a minimum threshold of 0. To ensure a comprehensive comparison, we assess performance across three standard metrics: Peak Signal-to-Noise Ratio (PSNR), Structural Similarity Index (SSIM), and Learned Perceptual Image Patch Similarity (LPIPS)~\citep{zhang2018unreasonable}.

\begin{table}[!ht]
\vspace{-10pt}
    \centering
    \caption{MLP+PE+Ours achieves the best performance across all metrics, demonstrating superior reconstruction quality and visual fidelity. WIRE ranks second, excelling in SSIM and LPIPS. RFF-based methods perform poorly, likely due to their inherent limitation in fitting non-square images, as diagonal frequency components near the edges are harder to cover. Overall, our proposed method shows significant effectiveness, achieving approximately 10 PSNR improvement.}
     \vspace{9pt}
    \begin{tabular}{lccc}
        \toprule
        Methods & PSNR↑ & SSIM↑ & LPIPS↓ \\
        \midrule
        MLP+PE        & 25.67 &0.7001 &  0.2674\\
        MLP+RFF       & 26.58 & 0.7180 & 0.2307 \\
        
        SIREN         & 30.90 & 0.8505& 0.1621\\
        GAUSS         & 33.34 & 0.8950 & 0.0693 \\
        WIRE          & 35.38 &  0.9247 &0.0386\\
        MLP+PE+Ours& \textbf{40.96} & \textbf{0.9719} & \textbf{0.0126} \\
        MLP+RFF+Ours&\underline{36.63} & \underline{0.9545}& \underline{0.0220} \\
        \bottomrule
    \end{tabular}
    \label{tab:odak}
   \vspace{-5pt}
\end{table}
And the performance can be view at \autoref{figure:odak} where it can be found that our methods successfully reconstruct the windows with clarity, demonstrating their effectiveness.
\begin{figure*}[!ht]
    \centering
    \includegraphics[width=1.\textwidth]{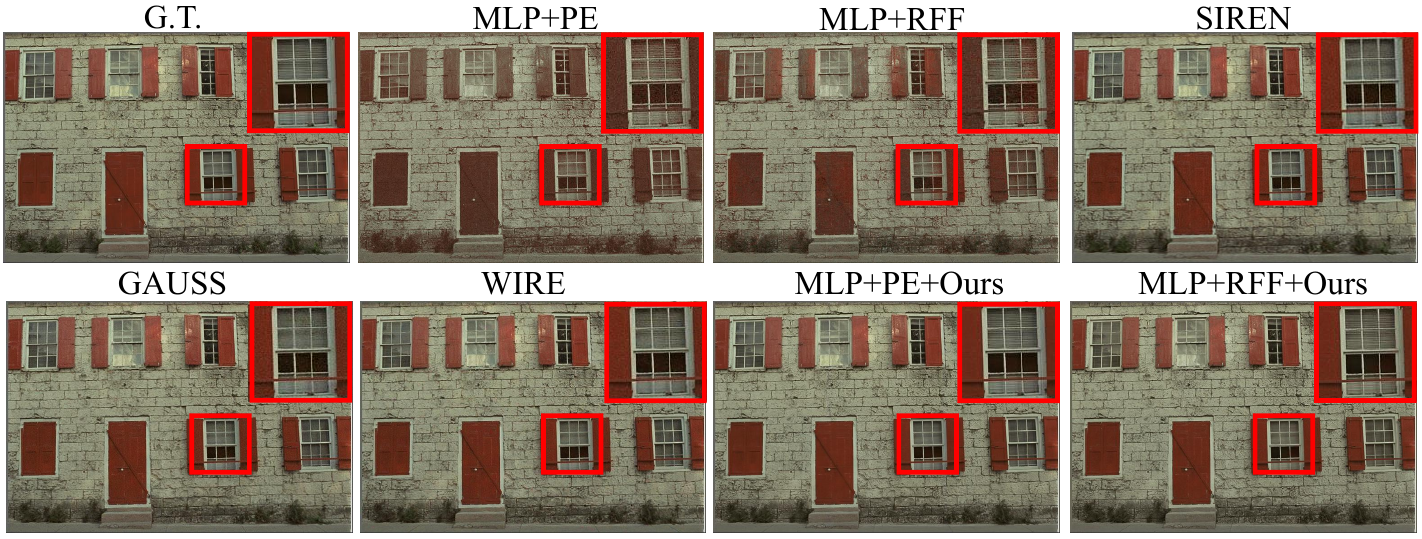}
    \caption{For the image regression task, our method can reach SOTA performance. It can be observed for the reconstruction quality of the window part in the image is the best without much noise and clear structure.}
    \vspace{-10pt} 
    \label{figure:odak}
\end{figure*}
Overall, by employing our method on the Fourier features embeddings, the overall performance can even surpass SOTA methods which validates the effectiveness of our proposed method that can not only reduce the noise level of the fitted result but also improve the fitting accuracy in different metrics.

\subsection{3D-Shape Regression}
\begin{table*}[!ht]
    \centering
    \caption{We highlight the best results in bold and underline the second-best results. Since the space is normalized, the detail difference will be extremely small in scale ($10^{-6}$) and the main structure is fitted well. However, the actual fitted result is quite different in details. Therefore, we also provide figures (\autoref{figure:exp_sdf}) of the performance to have a better understanding of the performance.}
    \resizebox{1\linewidth}{!}{
        \begin{tabular}{lcccccccc}
            \toprule
            Metric & MLP+PE & MLP+RFF & BACON & SIREN & GAUSS & WIRE & MLP+PE+Ours & MLP+RFF+Ours \\
            \midrule
            Chamfer Distance (↓) & 1.8413e-06 & 1.8525e-06 & 1.9535e-06 & 1.8313e-06 &  2.1593e-06&2.7243e-06&\textbf{1.7919e-06} & \underline{1.7947e-06} \\
            \bottomrule
        \end{tabular}}
    \label{table:metrics_comparison}
 \vspace{-10pt}
\end{table*}
 We evaluate our method on the Signed-Distance-Function (SDF) regression task, aiming to learn a function that maps 3D coordinates to their signed distance values. Positive values indicate points outside an object, and negative values are inside.
The objective is precise 3D shape reconstruction. We follow the experimental setup from \citet{lindell2022bacon}, training each model for 200,000 iterations with other hyperparameters the same as baselines provided. The learning rate for line-search was capped at $1 \times 10^{-3}$.
Performance is evaluated on four Stanford 3D Scanning Repository scenes~\footnote{\url{http://graphics.stanford.edu/data/3Dscanrep/}}: Armadillo, Dragon, Lucy, and Thai, each with 10,000 sampled points which is relatively sparse in 3D space.
To evaluate the performance of each model, we employ Chamfer distance instead of IOU. The absence of an official IOU implementation can lead to inconsistent results, whereas Chamfer Distance is computed by measuring the nearest vertex distances between the ground truth and predicted mesh vertices which can be implemented without much controversial and hyperparameters. Moreover, Chamfer distance reflects more details about the surface of object compared using sampling occupancy to calculate IOU. As the entire scene is rescaled to a 0–1 range, the Chamfer Distance is relatively small, with a magnitude on the order of \(10^{-6}\).
Our comparisons include baselines consistent with those used in the image regression task.

From quantification results shown in the \autoref{table:metrics_comparison}, Fourier features embeddings+our method achieves the lowest Chamfer Distance, demonstrating superior accuracy in shape reconstruction. Illustrations of results can be found at \autoref{figure:exp_sdf}, where it can be observed that the proposed method, to some extent, smoothed the surface while reconstructing more details compared with other baselines. However, GAUSS and WIRE tend to overfit the training set due to the sparsity of training set in 3D space, resulting in uneven surfaces, whereas SIREN exhibits smoother results but underfits the training dataset.

\begin{figure}[!ht]
    \includegraphics[width=.5\textwidth]{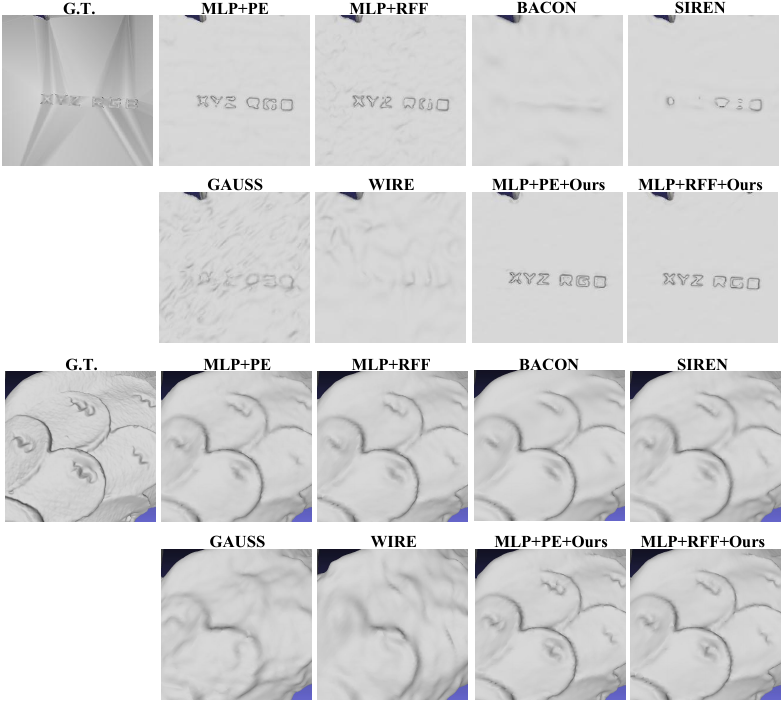} 
    \caption{Visualization of the 3D shape regression task shows that our method can smooth the surface while maintain detail structures.}
     \vspace{-10pt}
    \label{figure:exp_sdf}
\end{figure}

\subsection{Neural Radiance Field Experiments}
\begin{table}[!ht]
    \centering
    \caption{The quantitative results demonstrate our method can produce the best reconstruction in the dense input situations.}
    \vspace{10pt}
    \begin{tabular}{lccc}
        \toprule
        Methods & PSNR↑ & SSIM↑ & LPIPS↓ \\
        \midrule
        MLP+PE        & \underline{31.06} & \underline{0.9542} & \underline{0.0202} \\
        MLP+RFF        & 30.18 & 0.9476 & 0.0292 \\
        SIREN         & 25.52 & 0.8659 & 0.1500 \\
        GAUSS         & 27.87 & 0.9079 & 0.0707 \\
        WIRE          & 28.53 & 0.9198 & 0.0523 \\
        MLP+PE+Ours   & \textbf{31.45} & \textbf{0.9596} & \textbf{0.0172} \\
        MLP+RFF+Ours   & 30.70 & 0.9542 & 0.0232\\
        \bottomrule
        \vspace{-25pt}
    \end{tabular}
    \label{tab:nerf200}
    
\end{table}
This section explores the application of Neural Radiance Fields (NeRF) for fitting 3D scenes, focusing on reconstructing scenes by predicting color and density from 3D coordinates and viewing directions. The models are trained with MSE loss for 200,000 iterations, using the same hyperparameters as in the official implementation. Performance is evaluated using PSNR, SSIM, and LPIPS metrics. 

For the adaptive linear filer, we still employ 3 layers to maximize the performance. To minimize overfitting, we applied the line-search method (from $1 \times 10^{-3}$ to 0) and evaluated the models on the NeRF Blender dataset \citep{martin2021nerf} with 100 training images, which includes diverse synthetic scenes. Training utilized cropped 200$\times$200 images with a white background for consistency. Comparisons were conducted against a baseline MLP with Positional Encoding~\citep{mildenhall2021nerf}, SIREN~\citep{sitzmann2020implicit}, GAUSS~\citep{ramasinghe2022beyond} and WIRE~\citep{saragadam2023wire}. We use 8 layers for all models except for GAUSS and WIRE where we found 6 layers can achieve better performance. Since no official implementation of SIREN, GAUSS and WIRE exists for the NeRF task, we adapted the network structures implemented by WIRE to the nerf-pytorch codebase~\footnote{\url{https://github.com/yenchenlin/nerf-pytorch.git}} without altering hyperparameters that author suggested (notice that WIRE chooses sparse input setting but ours is dense setting).
The results in \autoref{tab:nerf200} show that our proposed method surpasses all baselines including WIRE and vanilla NeRF. As shown in \autoref{figure:nerflego}, our approach enables NeRF to capture finer details, such as the Lego's bucket.
\begin{figure}[!ht]
    \centering
    \includegraphics[width=0.45\textwidth]{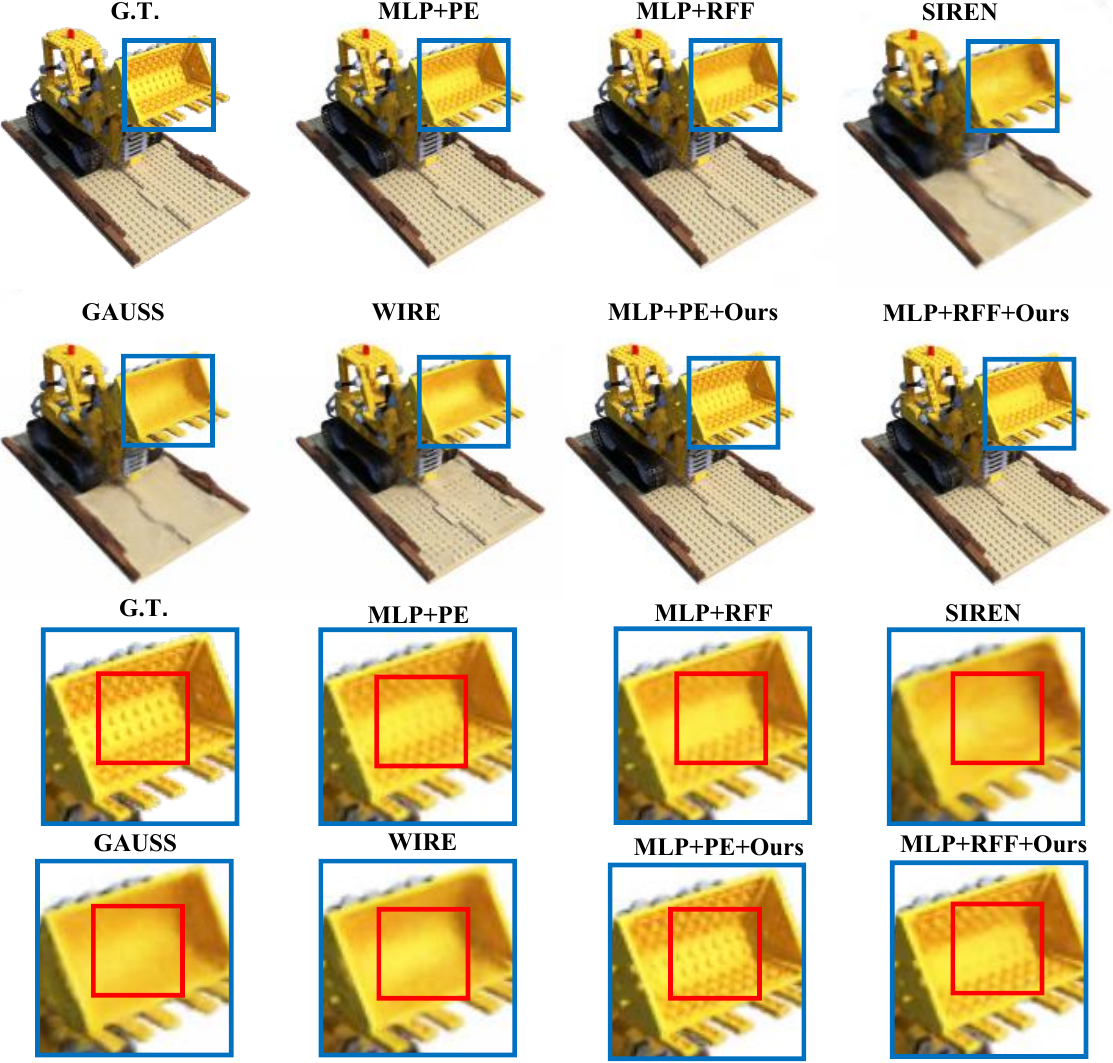}
    \caption{It can be observed that the reconstruction quality of the lego bucket remains high even at low resolutions when using our method.}
    \vspace{-15pt} 
    \label{figure:nerflego}
\end{figure}

\subsection{Ablation Study}
We validate the effectiveness of bias-free MLPs as adaptive linear filters still on the Kodak dataset. As even using our method learning rate scheduler is still necessary for the INRs part, therefore, we use Lambda Learning rate scheduler which is the most commonly used scheduler in all above tasks. The result is shown in the \autoref{tab:comparison}, where using our line-search based learning rate adjustor can further improve the performance.
\begin{table}[!ht]
    \centering
    \caption{Performance comparison of various methods for Image Regression. "w/o" stands for "without," "w/" stands for "with," and "L" refers to our custom line-search algorithm. Notice that our method is not contradict to the learning rate scheduler.}

    \small 
    \begin{tabular}{lccc}
        \toprule
        & \text{PSNR} $\uparrow$ & \text{SSIM} $\uparrow$ & \text{LPIPS} $\downarrow$ \\
        \midrule
        MLP + PE + Ours w/o L & 40.50 & 0.9691 &  0.0143 \\
        MLP + PE + Ours w/L  & \textbf{40.96} & \textbf{0.9719} & \textbf{0.0126} \\
        MLP + RFF + Ours w/o L & 36.08 & 0.9506 & 0.0247 \\
        MLP + RFF + Ours w/L  & \textbf{36.63} & \textbf{0.9545} & \textbf{0.0220}\\
        \bottomrule
    \end{tabular}
 
    \label{tab:comparison}
    \vspace{-10pt} 
\end{table}

\section{Conclusion}
In conclusion, we introduce a novel approach to reduce spectral bias and noise in implicit neural representations (INRs) with Fourier feature embeddings. By using bias-free MLPs as adaptive linear filters with line-search algorithm, our method suppresses unnecessary high frequencies and enhances embedding frequencies, boosting INRs performance.

\textbf{Limitations:} Despite the improvements, our method does not completely resolve finite sampling issues from the root. Additionally, while the line-search algorithm enhances the performance of the adaptive linear filter, it may lead to slower convergence. Addressing these challenges is part of our future work.
\bibliography{example_paper}
\bibliographystyle{icml2025}

\newpage
\appendix
\onecolumn
\clearpage
\section{Comparison with SAPE}
There might be arguing that our proposed method is similar to SAPE \citep{hertz2021sape}. However, from our perspective, we differ from this work in the following points:
\begin{itemize}
    \item Our proposed method can extend the frequency band of embeddings, i.e. $A_y*y$, where SAPE can be considered a simple mask that applied on the embeddings, i.e. $w*y$. This difference makes our method can reach better lower bound of the loss compared to SAPE. The performance on image regression task is demonstrate on \autoref{tab:SAPE}.
    
    \item  Our method using Bias-free MLPs as the filter which can be applied not only on grid-based structure, but also on continuous space like Neural Radiance Field. This is benefited from the continuous property of the prediction from MLPs.
\end{itemize}
\begin{table}[ht]
    \centering
        \caption{Performance comparison with SAPE and Our proposed method for both Positional Encoding and Random Fourier Features.}

    \small 
    \vspace{10pt}
    \begin{tabular}{lcccccc}
        \toprule
         &
        \multicolumn{3}{c}{Positional Encoding} &
        \multicolumn{3}{c}{Random Fourier Features} \\
        \cmidrule(lr){2-4} \cmidrule(lr){5-7}
        & PSNR$\uparrow$ & SSIM$\uparrow$ & LPIPS$\downarrow$ & PSNR$\uparrow$ & SSIM$\uparrow$ & LPIPS$\downarrow$ \\
        \midrule
        SAPE& 33.06 & 0.8996 & 0.0681 & 36.24 & 0.9455 & 0.0356  \\
        Ours  & \textbf{40.96} & \textbf{0.9719} & \textbf{0.0126} & \textbf{36.63} & \textbf{0.9545} & \textbf{0.0220}  \\
 
        \bottomrule
    \end{tabular}

    \label{tab:SAPE}
    \vspace{-10pt} 
\end{table}
\section{Convergence Comparison For Spike Function and Sinusoidal Function}
\label{sec:spike}
In this section, we compare the convergence speed and loss to provide a more profound understanding of how Fourier features, MLPs and their combination's representation capability for spike function and sinusoidal function (spatial compact and spectral compact) as illustrated in \autoref{figure:loss}.
\begin{figure}[!ht]
    \centering
    \includegraphics[width=0.75\textwidth]{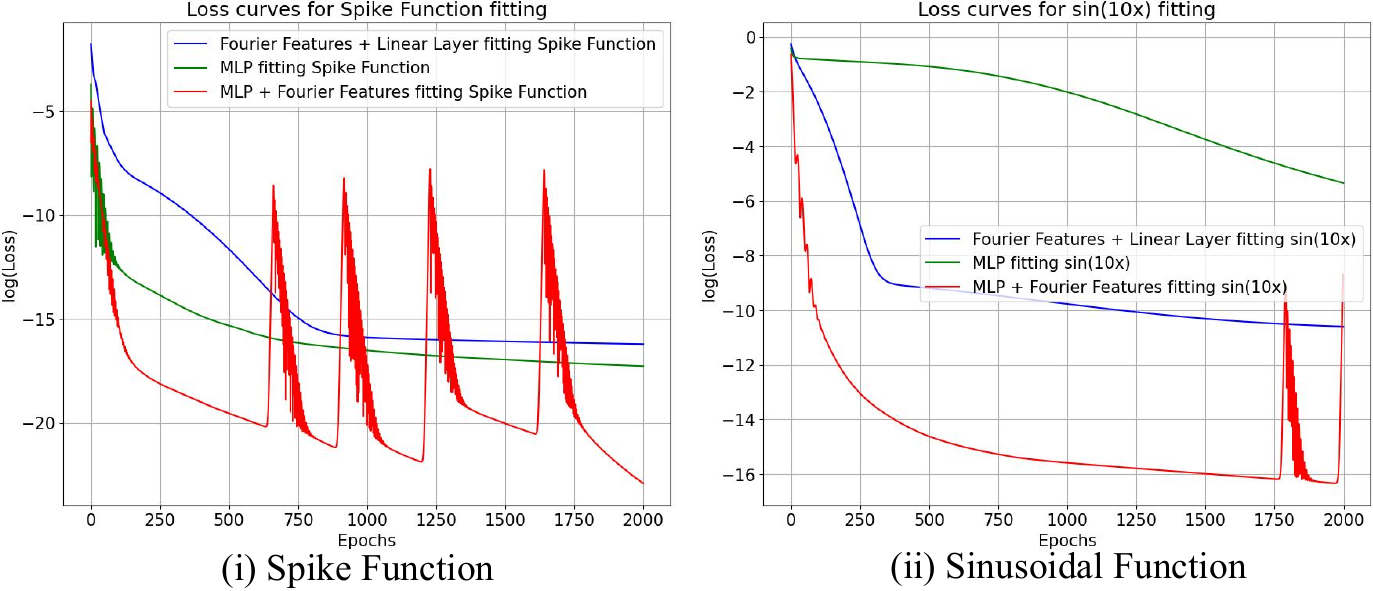}
    \caption{By comparing the value of loss function for Fourier features plus one linear layer, MLPs and their combination, it can be found that MLPs more good at fitting spatial compact function like spike function and Fourier features can fit sinusoidal function well. Through combining two models, the representation capability is even better for both situations.}
    \vspace{-10pt} 
    \label{figure:loss}
\end{figure}
\section{Convergence of Modified Line-search algorithm}
To address potential divergence concerns, we validate the convergence of the modified line-search algorithm on the DIV2K validation split with 256$\times$256 resolutions (for the fast inference speed and the large scale of the dataset(100 images)) for both RFF and PE embeddings. Results show consistent convergence for both, as illustrated in \autoref{figure:converg}. The learning rates of the adaptive linear filter steadily decrease throughout training, ultimately converging to 0, confirming the algorithm’s stability and convergence by the end of training.

\begin{figure}[!ht]
    \centering
    \includegraphics[width=0.75\textwidth]{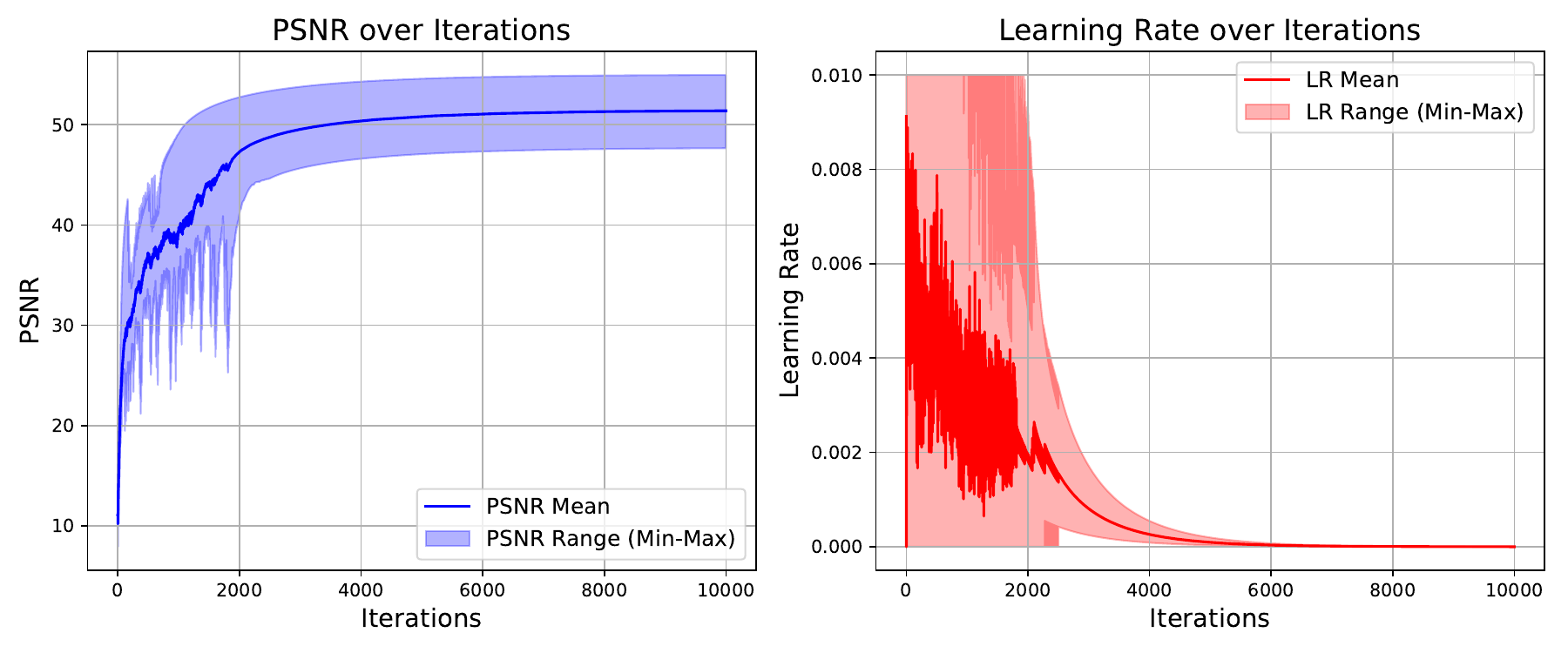}
    \caption{We demonstrate the convergence of the modified line-search algorithm through image regression experiments for RFF and PE. During training, both the PSNR and the learning rate consistently converge, confirming the effectiveness of our proposed line-search-based approach.}
    \vspace{-10pt} 
    \label{figure:converg}
\end{figure}
\section{Robustness under Varying Standard Deviation}
We also evaluate the impact of varying standard deviation on the same image regression task as in \autoref{figure:combined}. As shown in \autoref{figure:stats2}, unlike the results presented in \autoref{figure:combined}, performance remains stable even with high sampling standard deviation when our method is applied. This highlights the robustness of our approach under high sampling standard deviation.

\begin{figure}[!h]
    \centering
    \includegraphics[width=.8\textwidth]{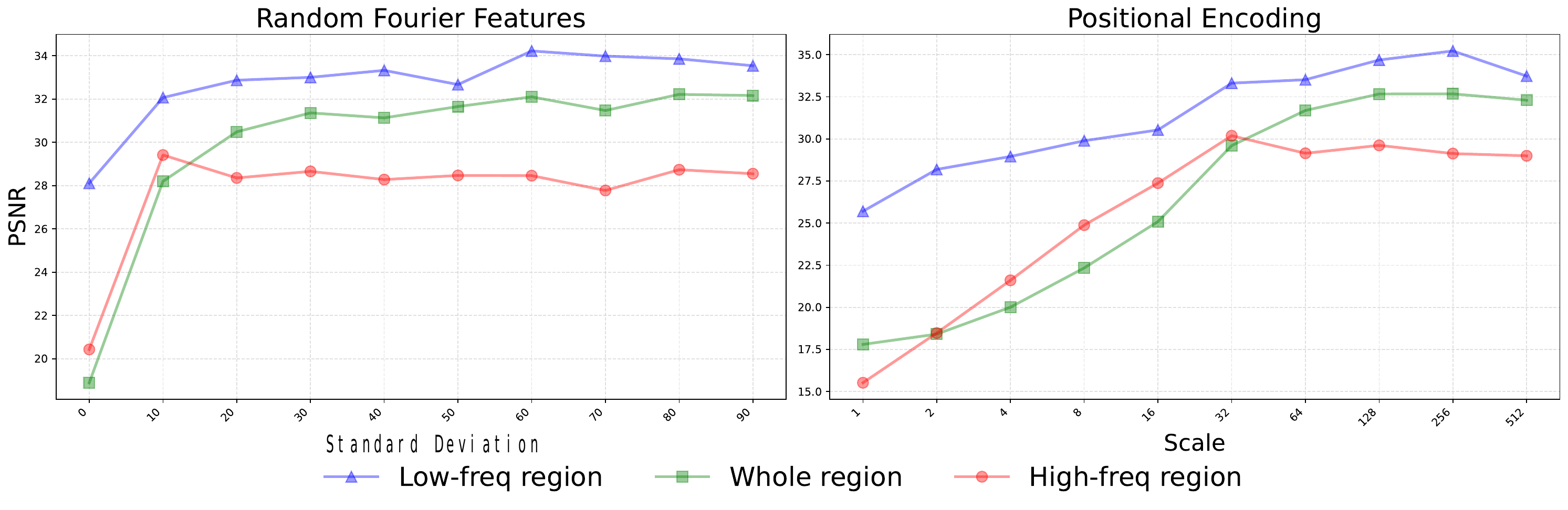}
    \caption{We evaluated our method's ability to mitigate high-frequency artifacts in two Fourier feature embedding methods. Results show that our approach effectively prevents model degradation under high standard deviation conditions for RFF, where traditional embeddings struggle.
}
    \vspace{-15pt} 
    \label{figure:stats2}
\end{figure}

\section{Varying the Number of Layers}
\label{section:layer}
In this section, we investigate the impact of the number of layers in bias-free MLPs on overall performance and demonstrate that simply increasing the number of layers of MLP plus Fourier features embeddings cannot significantly enhance performance as our methods. To evaluate this, we firstly conducted experiments on images from the DIV2K dataset at a resolution of 256$\times$256 for larger dataset and fast training speed. Filters with 1, 2, 3, 4, and 5 layers were tested with the same 3 layers INRs part, and the results are presented in \autoref{figure:bfMLPlayers}. The performance initially improves as the number of layers increases to 3 but starts to decline beyond that. This may be due to over-parameterization, which can reduce smoothness and make it more challenging for the filter to preserve the input signals.
\begin{figure}[!ht]
    \centering
    \includegraphics[width=0.4\textwidth]{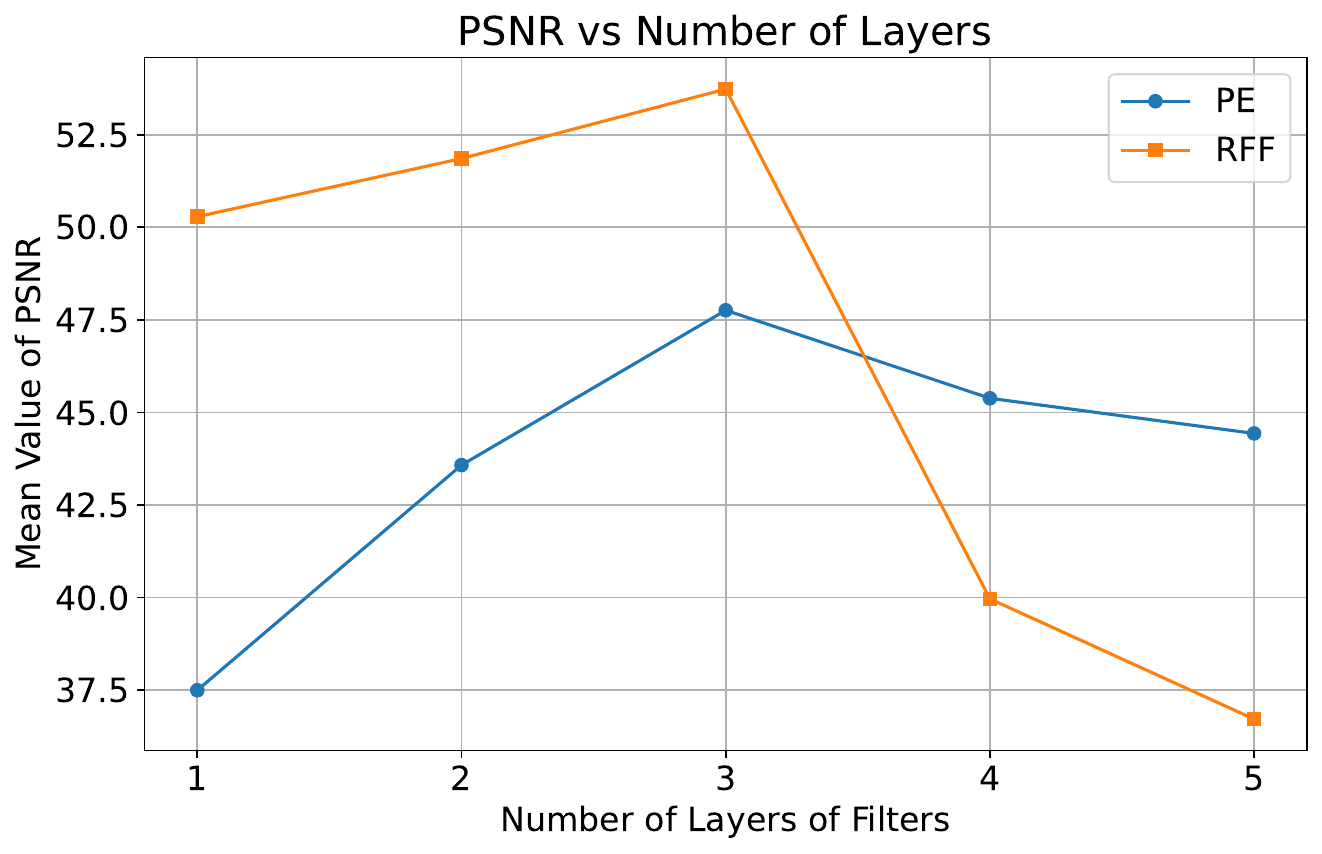}
    \caption{As the number of filter layers increases to 3, performance reaches its peak with around 52.7 PSNR and 47.5 PSNR respectively, but begins to decline with further increases in layers.}
    \vspace{-10pt} 
    \label{figure:bfMLPlayers}
\end{figure}
 We also test the impact of the number of layers on MLPs with Fourier features using the same sampled images. As shown in \autoref{figure:layers}, performance improves with more layers up to 12, after which it begins to decline. Despite the substantially larger number of parameters at 12 layers, the performance still fails to surpass that of our methods as previously illustrated in \autoref{figure:bfMLPlayers}.
\begin{figure}[!ht]
    \centering
    \includegraphics[width=0.4\textwidth]{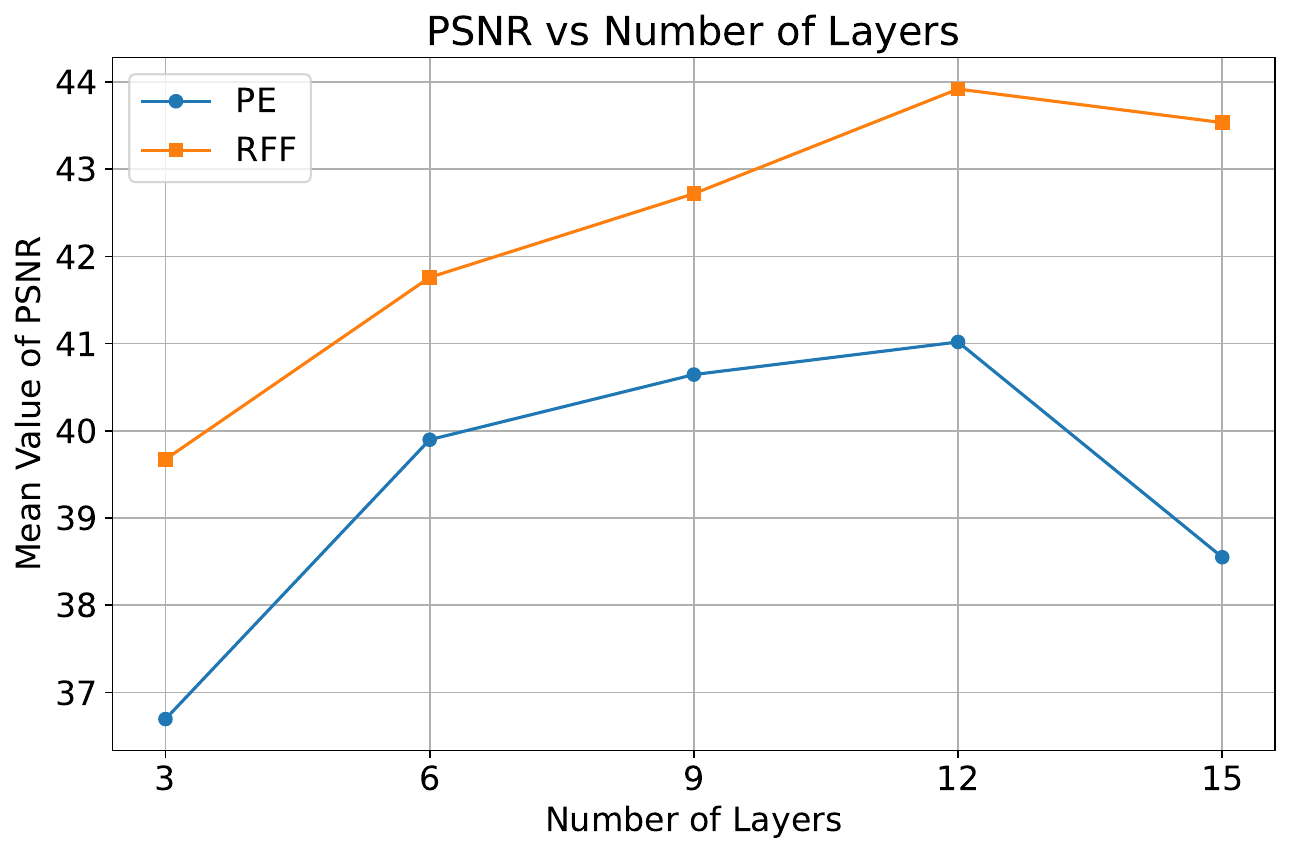}
    \caption{Performance increases with additional layers up to 12 with around 44 PSNR and 41 PSNR respectively, but starts to diminish beyond that point.}
    \vspace{-10pt} 
    \label{figure:layers}
\end{figure}

\section{Bias-free MLPs vs Bias MLPs}
We also compare the influence of additive term to the performance of the filter for the NeRF task. This is because that the NeRF task involves the interpolation task where the frequency pattern of the embeddings should not be disrupted by the filter. For the image regression task where it requires overfitting, the difference for the bias and bias-free MLPs may not be so obvious. The result is shown in \autoref{table:BIAS} where it shows that additive term indeed can worsen the performance.
\label{section:BFExp}
\begin{table}[!h]
\vspace{-10pt}
    \centering
    \caption{The result of adaptive linear filter w/w.o. bias term tested on the NeRF task.}
    \resizebox{0.75\linewidth}{!}{
    \begin{tabular}{lcccc}
        \toprule
      &  Bias MLP+PE &  Bias-free MLP+PE  & Bias MLP+RFF  &  Bias-free MLP+RFF \\
        \midrule
        PSNR↑  & 31.17 & \textbf{31.45}& 30.18&\textbf{30.70}  \\
        SSIM↑  &0.9563  &\textbf{0.9596}& 0.9476&\textbf{0.9542} \\
        LPIPS↓ & 0.0201 &\textbf{0.0172}& 0.0293&\textbf{0.0232}\\
        \bottomrule
    \end{tabular}
    }
    \label{table:BIAS}
    \vspace{-15pt}
\end{table}
\section{Definition of High-dimensional Fourier Series}
For a d-dimensional periodic function $f(\mathbf{x})$ with input $\mathbf{x} = [x_1, x_2,\cdots,x_d]^{\top}$ be a 2$\pi$ period function with respect to each components. Then the function $f(\mathbf{x})$ can be expanded as:
$$
f(\mathbf{x}) = \sum_{\mathbf{m}\in\mathbb{Z}^d}\hat{f}_m\mathbf{e}^{i\mathbf{m}^{\top}\mathbf{x}}
$$
where $\hat{f}_m$ is the coefficient of different frequency component.

\section{Definition of Neural Tangent Kernel}
\label{section：NTK}
The Neural Tangent Kernel (NTK), a prominent tool for neural network analysis, has attracted considerable attention since its introduction. To simplify the analysis, this section will focus specifically on the NTK for two-layer MLPs, as the subsequent analysis also relies on the two-layer assumption. The two-layer MLP, $f(\mathbf{x}; \mathbf{w})$, with activation function $\sigma(\cdot)$ and input $\mathbf{x} \in \mathbb{R}^d$, can be expressed as follows:
$$
    f(\mathbf{x}; \mathbf{w}) = \frac{1}{\sqrt{m}}\sum^{m}_{r=1}a_r\sigma(\mathbf{w}^{\top}_r\mathbf{x}+\mathbf{b}_r)
$$
where m is the width of the layer and $\Arrowvert\mathbf{x}\Arrowvert = 1$ (also can be written as $\mathbf{x}\in\mathbb{S}^{d-1}$, where $\mathbb{S}^{d-1}\equiv\{\mathbf{x}\in\mathbb{R}^d:\Arrowvert\mathbf{x}\Arrowvert = 1\}$). The term $\frac{1}{\sqrt{m}}$ is used to assist the analysis of the network. Based on this MLP, the kernel is defined as the following:
$$
k(\mathbf{x_i}, \mathbf{x_j}) = \mathbb{E}_{\mathbf{w}\sim\mathcal{I}}\left\lbrace\left\langle\frac{\partial f(\mathbf{x_i};\mathbf{w})}{\partial \mathbf{w}}, \frac{\partial f(\mathbf{x_j};\mathbf{w})}{\partial \mathbf{w}}\right\rangle\right\rbrace
$$
This formula enables the exact expression of the NTK to better analyze the behavior and dynamics of MLP. 
For a two-layer MLP with a rectified linear unit (ReLU) activation function where only the first layer weights are trained and the second layer is frozen, the NTK of this network can be written as the following \citep{xie2017diverse}:
\begin{equation*}
k(\mathbf{x_i}, \mathbf{x_j}) = \frac{1}{4\pi}(\langle\mathbf{x_i}, \mathbf{x_j}\rangle+1)(\pi-arccos(\langle\mathbf{x_i}, \mathbf{x_j}\rangle))    
\label{Eq:NTK}
\end{equation*}
This expression can help us to determine the eigenfunction and eigenvalue of kernel and therefore provide a more insightful analysis of the network.
\section{Proof of the Proposed Proposition}
\label{section:proof}
In this section, we will introduce why the unselected frequencies of the Fourier features will form a lower bound for the theoretical performance. Compared to \citep{yuce2022structured} where they show that the INRs with embeddings can be decomposed into the Fourier basis, we view the INRs with Fourier features embeddings from the perspective of Neural Tangent Kernels and derive a similar result about the Harmonic expansion of the INRs.

\begin{lemma}
    (\citet{yuce2022structured}) Let $\{\mathbf{b}^{(1)}_i\in\mathbb{R}^d\}_{i\in[N]}$ and $\{\mathbf{b}^{(2)}_j\in\mathbb{R}^d\}_{j\in[M]}$ be two sets of frequency vectors and N and M are integers that represent the size for each set, $\mathbf{x}\in\mathbb{R}^d$ is the coordinates in d-dimensional space. Then,
    \begin{equation*}
    \begin{split}
            &\left(\sum^N_{i=1}c^{(1)}_i cos(\mathbf{b}^{(1)\top}_i\mathbf{x})\right)\left(\sum^M_{j=1}c^{(2)}_jcos(\mathbf{b}^{(2)\top}_j\mathbf{x})\right) \\&= \left(\sum^{T}_{k=1}c^{*}_kcos(\mathbf{b}^{*\top}_k\mathbf{x})\right), where\, T\leq 2NM
    \end{split}
    \end{equation*}
    where,
    \begin{equation*}
    \mathbf{b}^*\in \left\{ \mathbf{b}^*=\mathbf{b}^{(1)}_i\pm\mathbf{b}^{(2)}_j\,\vline\,i\in[N],\, j\in[M]\right\}
    \end{equation*}
\label{lemma:2}
\end{lemma}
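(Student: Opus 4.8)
The plan is to treat this as a purely algebraic consequence of the product-to-sum trigonometric identity, with the only real work being the bookkeeping that produces the frequency set and the bound $T\le 2NM$. No eigenvalue or kernel machinery is needed here; everything reduces to expanding a product of two finite cosine sums.

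First I would distribute the product of the two finite sums over all index pairs $(i,j)$, writing
\begin{equation*}
\left(\sum_{i=1}^N c^{(1)}_i \cos(\mathbf{b}^{(1)\top}_i\mathbf{x})\right)\left(\sum_{j=1}^M c^{(2)}_j\cos(\mathbf{b}^{(2)\top}_j\mathbf{x})\right) = \sum_{i=1}^N\sum_{j=1}^M c^{(1)}_i c^{(2)}_j \cos(\mathbf{b}^{(1)\top}_i\mathbf{x})\cos(\mathbf{b}^{(2)\top}_j\mathbf{x}),
\end{equation*}
a double sum of $NM$ terms. Next, to each summand I would apply the identity $\cos A\cos B = \tfrac12[\cos(A-B)+\cos(A+B)]$ with $A=\mathbf{b}^{(1)\top}_i\mathbf{x}$ and $B=\mathbf{b}^{(2)\top}_j\mathbf{x}$. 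Since $A\pm B=(\mathbf{b}^{(1)}_i\pm\mathbf{b}^{(2)}_j)^\top\mathbf{x}$, every product splits into exactly two cosines whose frequency vectors are $\mathbf{b}^{(1)}_i\pm\mathbf{b}^{(2)}_j$, each carrying coefficient $\tfrac12 c^{(1)}_i c^{(2)}_j$. This produces $2NM$ cosine terms, and each of their frequencies lies in the claimed set $\{\mathbf{b}^{(1)}_i\pm\mathbf{b}^{(2)}_j : i\in[N],\, j\in[M]\}$.

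Finally I would collect terms by distinct frequency vectors: grouping the $2NM$ cosines according to which $\mathbf{b}^*$ they carry, and summing the associated $\tfrac12 c^{(1)}_i c^{(2)}_j$ coefficients, defines a single coefficient $c^*_k$ for each distinct frequency $\mathbf{b}^*_k$. Relabeling the distinct frequencies as $\mathbf{b}^*_1,\dots,\mathbf{b}^*_T$ yields the stated single-sum form, and since there are at most $2NM$ terms to begin with, $T\le 2NM$.

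The step I would treat most carefully — and the only genuine subtlety — is this final accounting, which is precisely why the bound is an inequality rather than an equality. Because cosine is even, a frequency $\mathbf{b}^*$ and its negation $-\mathbf{b}^*$ produce identical cosine terms, and distinct index pairs may independently generate coinciding frequencies; both effects let terms merge, so the number of surviving distinct frequencies can drop strictly below $2NM$. There is no deeper obstacle: the argument extends verbatim to the $d$-dimensional and sine-containing cases, since every product of two Fourier atoms is governed by the same sum-and-difference frequency rule.
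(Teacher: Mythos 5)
Your proof is correct. Note that the paper itself never proves this lemma---it is imported by citation from Y\"uce et al.\ (2022) with no argument given---so there is no internal proof to compare against; your expansion via $\cos A\cos B=\tfrac12[\cos(A-B)+\cos(A+B)]$, followed by collecting coincident frequencies to obtain $T\le 2NM$, is precisely the standard argument underlying the cited result, and your closing observation (evenness of cosine and accidental coincidences of sum/difference frequencies are what make the bound an inequality) is the correct accounting of the only subtle point.
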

\begin{lemma}
    (\citet{yuce2022structured}) $\{\mathbf{b}_i\in\mathbb{R}^d\}_{i\in[n]}$ be a set of frequency vectors and N is an integer that represents the size, $\mathbf{x}\in\mathbb{R}^d$ is the coordinates in d-dimensional space. Then,
    \begin{equation*}
    \left(\sum^n_{i=1}cos(\mathbf{b}^{\top}_i\mathbf{x})\right)^k = \left(\sum^{N}_{k=1}cos(\mathbf{b}^{*\top}_k\mathbf{x})\right), where\, N \leq k^nnk
    \end{equation*}
    where,
    \begin{equation*}
    \mathbf{b}^*\in \left\{ \mathbf{b}^*=\sum_i^n c_i\mathbf{b_i}\,\vline\, c_i\in\mathbb{Z},\,\sum_i^n|c_i|\leq k\right\}
    \end{equation*}
\label{lemma:3}
\end{lemma}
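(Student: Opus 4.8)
The plan is to prove the identity by induction on the exponent $k$, using the two-factor product-to-sum rule of Lemma~\ref{lemma:2} as the engine. Write $S(\mathbf{x}) = \sum_{i=1}^n \cos(\mathbf{b}_i^{\top}\mathbf{x})$, so the claim concerns $S(\mathbf{x})^k$. For the base case $k=1$ the expression is already $S(\mathbf{x})$ itself: it is a sum of $n$ cosines, each with frequency $\mathbf{b}_i = 1\cdot\mathbf{b}_i$, which has $\ell_1$-coefficient norm $1 \le 1$, so both the frequency-support condition and the count are satisfied.

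For the inductive step I would assume $S(\mathbf{x})^{k-1} = \sum_{p} c_p^{*}\cos(\mathbf{b}_p^{*\top}\mathbf{x})$ where each $\mathbf{b}_p^{*} = \sum_i c_i^{(p)}\mathbf{b}_i$ with $\sum_i |c_i^{(p)}| \le k-1$, then factor $S(\mathbf{x})^k = S(\mathbf{x})^{k-1}\cdot S(\mathbf{x})$ and apply Lemma~\ref{lemma:2} with the first set of frequencies being $\{\mathbf{b}_p^{*}\}$ and the second being $\{\mathbf{b}_i\}$. The lemma guarantees the product is again a sum of cosines whose frequencies are the combinations $\mathbf{b}_p^{*}\pm\mathbf{b}_i$. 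The clean part of the argument is the observation that $\mathbf{b}_p^{*}\pm\mathbf{b}_i = \sum_j c_j^{(p)}\mathbf{b}_j \pm \mathbf{b}_i$ only changes the coefficient of $\mathbf{b}_i$ by $\pm 1$, so the new $\ell_1$-norm is at most $\sum_j |c_j^{(p)}| + 1 \le (k-1)+1 = k$. This propagates the invariant $\mathbf{b}^{*} = \sum_i c_i\mathbf{b}_i$ with $\sum_i |c_i| \le k$ from step $k-1$ to step $k$, establishing the membership claim for every $k$. Throughout, frequencies are identified up to sign, since $\cos(-\theta) = \cos(\theta)$.

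The count is the delicate part, and I expect it to be the main obstacle. The recursion supplied by Lemma~\ref{lemma:2} only gives $N_k \le 2\,N_{k-1}\,n$, hence $N_k \le 2^{k-1}n^k$, which is not literally the stated $k^n n k$. To obtain the stated bound I would instead count distinct terms directly from the frequency-support description: every surviving frequency lies in the set $\{\sum_i c_i\mathbf{b}_i : c_i\in\mathbb{Z},\ \sum_i |c_i| \le k\}$, so $N$ is at most the cardinality of this set, i.e. the number of integer coefficient vectors with $\ell_1$-norm at most $k$ (the lattice points of the $\ell_1$-ball of radius $k$ in $\mathbb{Z}^n$), which yields a crude polynomial-in-$k$ estimate of the stated form. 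The fiddly step is exactly this bookkeeping — reconciling the naive recursive doubling count with the tighter lattice-point count and pinning down the precise constant $k^n n k$; the structural content, namely that no frequencies outside the prescribed integer combinations can appear, is already delivered cleanly by the induction.
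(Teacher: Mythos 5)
A preliminary point: the paper contains no proof of this lemma at all --- it is imported verbatim from \citet{yuce2022structured} and used as a black box inside the proof of Lemma~\ref{lemma1} --- so your attempt can only be judged on its own merits, not against an in-paper argument. On those merits, the structural half of your proof is sound and is the natural route: writing $S(\mathbf{x})=\sum_{i=1}^n\cos(\mathbf{b}_i^\top\mathbf{x})$, factoring $S^k=S^{k-1}\cdot S$, and applying Lemma~\ref{lemma:2}, every new frequency has the form $\mathbf{b}_p^*\pm\mathbf{b}_i$, which perturbs a single integer coefficient by $\pm 1$ and hence preserves $\sum_i|c_i|\le k$; together with the trivial base case this gives the membership claim. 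Note also that you silently (and correctly) restore amplitude coefficients $c_p^*$ in the inductive hypothesis: as printed, the right-hand side of the lemma carries no coefficients, and in that form the identity is false (e.g.\ $\cos^2\theta=\tfrac12+\tfrac12\cos 2\theta$), so the lemma must be read the way you read it.

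The genuine gap is the count, exactly where you flagged it. Your fallback --- bounding $N$ by the number of integer points in the $\ell_1$-ball of radius $k$ in $\mathbb{Z}^n$ --- does \emph{not} deliver the stated bound $k^n nk$: for $n=k=1$ the ball contains $3$ points ($c\in\{-1,0,1\}$) while $k^n nk=1$; for $n=2$, $k=1$ it contains $5$ points while the bound is $2$. The raw lattice count only becomes compatible with the statement after imposing the two restrictions the expansion actually obeys: (i) identification of $c$ with $-c$ (which you mention), and (ii) the parity constraint $\sum_i|c_i|\equiv k \pmod 2$ (which you omit), which holds because expanding $\prod_i\cos^{\alpha_i}(\mathbf{b}_i^\top\mathbf{x})$ with $\sum_i\alpha_i=k$ produces only coefficient vectors with $c_i\equiv\alpha_i\pmod 2$, all with strictly positive weights, so no cancellation can enlarge or shrink the support unexpectedly. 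With (i) and (ii) the counterexamples above close ($1\le 1$ and $2\le 2$), and the extreme regimes ($n$ fixed with $k$ large, $k$ fixed with $n$ large) fall under $k^n nk$; carrying out that bookkeeping for all $(n,k)$ is precisely what a complete proof still requires. Alternatively, you could keep your honest recursive bound $N_k\le 2nN_{k-1}\le 2^{k-1}n^k$ and note that the precise constant is not load-bearing anywhere in this paper: the proof of Lemma~\ref{lemma1} only propagates loose counts of the same flavor (e.g.\ $M\le k^m km$), and the downstream result, Lemma~\ref{lemma:1}, uses only the span-membership property, never the cardinality. But as a proof of the lemma as stated, your counting step remains incomplete.
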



\begin{lemma}
      Given a pre-sampled frequency set $\mathbf{B}_n = \{\mathbf{b}_i\in\mathbb{N}^d\}_{i\in[N]}$ and the Fourier features projection, $\gamma(\cdot)$, as $\gamma(\mathbf{x}) = [sin(2\pi \mathbf{b}_i^{\top}\mathbf{x}), cos(2\pi \mathbf{b}_i^{\top}\mathbf{x})]_{i\in[N]}, [N] = {1, 2, 3,\cdots, N}$. Then, $\gamma(\mathbf{x})^\top\gamma(\mathbf{
      z}) = sum(\gamma(\mathbf{x-z}))$.
\label{lemma:4}
\end{lemma}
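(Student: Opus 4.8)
The plan is to compute the inner product $\gamma(\mathbf{x})^\top\gamma(\mathbf{z})$ directly and to recognize it as a shift-invariant (stationary) kernel, which is the real content of the statement and exactly what the later NTK-based argument needs. First I would write the embedding as a concatenation of two-dimensional blocks, one per frequency $\mathbf{b}_i$, namely $(\sin(2\pi\mathbf{b}_i^\top\mathbf{x}),\cos(2\pi\mathbf{b}_i^\top\mathbf{x}))$. Since the Euclidean inner product splits as a sum over these blocks, I obtain
\begin{equation*}
\gamma(\mathbf{x})^\top\gamma(\mathbf{z}) = \sum_{i=1}^N\bigl(\sin(2\pi\mathbf{b}_i^\top\mathbf{x})\sin(2\pi\mathbf{b}_i^\top\mathbf{z}) + \cos(2\pi\mathbf{b}_i^\top\mathbf{x})\cos(2\pi\mathbf{b}_i^\top\mathbf{z})\bigr).
\end{equation*}

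The key step is the angle-subtraction identity $\cos A\cos B + \sin A\sin B = \cos(A-B)$, applied term by term with $A = 2\pi\mathbf{b}_i^\top\mathbf{x}$ and $B = 2\pi\mathbf{b}_i^\top\mathbf{z}$. Because $A-B = 2\pi\mathbf{b}_i^\top(\mathbf{x}-\mathbf{z})$, each block collapses to a single cosine of the difference, yielding
\begin{equation*}
\gamma(\mathbf{x})^\top\gamma(\mathbf{z}) = \sum_{i=1}^N\cos\bigl(2\pi\mathbf{b}_i^\top(\mathbf{x}-\mathbf{z})\bigr).
\end{equation*}
This already shows the kernel depends only on $\mathbf{x}-\mathbf{z}$, so everything up to here is a routine trigonometric reduction.

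Finally I would connect the right-hand side to $\operatorname{sum}(\gamma(\mathbf{x}-\mathbf{z}))$: the cosine entries of $\gamma(\mathbf{x}-\mathbf{z})$ are precisely $\cos(2\pi\mathbf{b}_i^\top(\mathbf{x}-\mathbf{z}))$, so aggregating the entries reproduces the displayed sum. The main obstacle, and the one genuinely delicate point, is the treatment of the sine entries $\sin(2\pi\mathbf{b}_i^\top(\mathbf{x}-\mathbf{z}))$ of $\gamma(\mathbf{x}-\mathbf{z})$, which carry an odd dependence on the argument and do not appear on the left-hand side. I expect to resolve this either by reading $\operatorname{sum}(\cdot)$ as the aggregate of the cosine components, or by invoking a symmetry of the frequency set under $\mathbf{b}_i\mapsto-\mathbf{b}_i$ that forces $\sum_i\sin(2\pi\mathbf{b}_i^\top(\mathbf{x}-\mathbf{z}))=0$; without such a convention the identity holds only up to this odd contribution. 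Pinning down the precise convention under which the stated equality is exact is therefore the crux, the rest being immediate.
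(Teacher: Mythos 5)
Your proof is correct and takes exactly the same route as the paper's: decompose the inner product into per-frequency blocks and apply the identity $\cos A\cos B+\sin A\sin B=\cos(A-B)$ to get $\sum_{i=1}^N\cos\bigl(2\pi\mathbf{b}_i^\top(\mathbf{x}-\mathbf{z})\bigr)$. The ``delicate point'' you flag is real but is silently glossed over in the paper, whose proof simply writes the final equality $\sum_{i=1}^N\cos\bigl(2\pi\mathbf{b}_i^\top(\mathbf{x}-\mathbf{z})\bigr)=\mathrm{sum}(\gamma(\mathbf{x}-\mathbf{z}))$ without addressing the sine entries of $\gamma(\mathbf{x}-\mathbf{z})$; your reading of $\mathrm{sum}(\cdot)$ as aggregating only the cosine components is the convention needed to make the stated identity exact, so your treatment is in fact slightly more careful than the original.
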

\begin{proof}
    \begin{align*}
        &\gamma(\mathbf{x})^\top\gamma(\mathbf{z})\\ &= \sum^N_{i=1} cos(2\pi \mathbf{b}_i^{\top}\mathbf{x})cos(2\pi \mathbf{b}_i^{\top}\mathbf{z}) + sin(2\pi \mathbf{b}_i^{\top}\mathbf{x})sin(2\pi \mathbf{b}_i^{\top}\mathbf{z})\\
        &=\sum^N_{i=1}cos(2\pi \mathbf{b}_i^{\top}(\mathbf{x}-\mathbf{z})) = sum(\gamma(\mathbf{x}-\mathbf{z}))
    \end{align*}
\end{proof}
\begin{lemma}
    For a two-layer Multilayer-perceptrons (MLPs) denoted as $f(\mathbf{x};\mathbf{W})$, where $\mathbf{x}\in\mathbb{R}^d$ as input and $\mathbf{W}$ as the parameters of the MLPs. 
    Then the order-N approximation of eigenvectors of the Neural Tangent Kernel (Eq.\ref{Eq:NTK}) when using Fourier features embedding, as defined in Def.\ref{def:ff}, to project the input to the frequency space can be presented as,
    \begin{equation*}
    \begin{split}
               &k(\gamma(\mathbf{x}),\gamma(\mathbf{z}))= \sum^{N^{\dag}}_{i=1} \lambda^2_i cos(\mathbf{b}^*\mathbf{x})cos(\mathbf{b}^*\mathbf{z})\\ &+ \sum^{N^{\dag}}_{i=1} \lambda^2_i sin(\mathbf{b}^*\mathbf{x})sin(\mathbf{b}^*\mathbf{z})\text{, where $N^\dag\leq 4Nk^mkm^2$}
    \end{split}
    \end{equation*}
    where 
\begingroup
\small 
\begin{equation*}
    \mathbf{b}^*\in \mathcal{L}_{Span\{b_j\}} \equiv \left\{   \mathbf{b}^*=\sum^n_{j=1} c_j\mathbf{b}_j\,\vline\, \sum^{\infty}_{j=1}|c_j|<N+k^mkm+m  \right\}
\end{equation*}
\endgroup
    and $\lambda_i$s are eigenvalues for each eigenfunctions $sin(\mathbf{b}^*\mathbf{x})$ and $cos(\mathbf{b}^*\mathbf{x})$.
\label{lemma1}
\end{lemma}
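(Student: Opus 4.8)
The plan is to recognize the ReLU NTK as a \emph{dot-product kernel} evaluated on the Fourier-embedded inputs, and then to exploit the fact that the inner product of two Fourier feature vectors is itself a trigonometric polynomial. First I would normalize the embedding: since $\gamma(\mathbf{x})^\top\gamma(\mathbf{x}) = \mathrm{sum}(\gamma(\mathbf{0})) = N$ by \cref{lemma:4}, every embedded point has constant norm $\sqrt{N}$, so $\hat{\gamma}(\mathbf{x}) = \gamma(\mathbf{x})/\sqrt{N}$ lies on a sphere and the unit-norm assumption behind the ReLU NTK is met. Writing $u := \langle \hat{\gamma}(\mathbf{x}),\hat{\gamma}(\mathbf{z})\rangle$, \cref{lemma:4} gives $u = \tfrac1N\sum_{i=1}^N \cos\!\big(2\pi \mathbf{b}_i^\top(\mathbf{x}-\mathbf{z})\big)$, i.e.\ the kernel argument is a cosine sum in the difference $\mathbf{x}-\mathbf{z}$ supported on the frequencies $\{\mathbf{b}_i\}$. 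The ReLU NTK then has the form $k=\kappa(u)$ with $\kappa(u)=\tfrac{1}{4\pi}(u+1)\big(\pi-\arccos u\big)$.

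Next I would linearize the only non-polynomial ingredient, $\arccos$. Using $\pi-\arccos u = \tfrac{\pi}{2}+\arcsin u$ together with the convergent series $\arcsin u = \sum_{n\ge 0} \frac{(2n)!}{4^n (n!)^2 (2n+1)}\,u^{2n+1}$, the function $\kappa$ becomes an analytic power series in $u$. Truncating this series at degree $N$ yields the \emph{order-$N$ approximation} named in the statement, reducing $k$ to a finite polynomial $\sum_{k\le N} \alpha_k u^k$ in the cosine sum $u$.

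Then I would push each power $u^k$ through the frequency calculus of \cref{lemma:2,lemma:3}. By \cref{lemma:3}, $u^k = \big(\tfrac1N\sum_i\cos(2\pi\mathbf{b}_i^\top\mathbf{w})\big)^k$, with $\mathbf{w}=\mathbf{x}-\mathbf{z}$, is a finite cosine sum whose frequencies lie in the integer lattice $\{\sum_j c_j\mathbf{b}_j : \sum_j |c_j|\le k\}$; collecting over all $k\le N$ places every resulting frequency $\mathbf{b}^*$ in the lattice $\mathcal{L}_{\mathrm{Span}\{\mathbf{b}_j\}}$, and the count of distinct modes is bounded by summing the per-power bounds of \cref{lemma:3}, giving $N^\dagger$ and the coefficient constraint. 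Finally, each term $\cos(\mathbf{b}^{*\top}\mathbf{w})$ is split by the angle-addition identity $\cos(\mathbf{b}^{*\top}(\mathbf{x}-\mathbf{z})) = \cos(\mathbf{b}^{*\top}\mathbf{x})\cos(\mathbf{b}^{*\top}\mathbf{z}) + \sin(\mathbf{b}^{*\top}\mathbf{x})\sin(\mathbf{b}^{*\top}\mathbf{z})$, which produces exactly the $\cos\cos+\sin\sin$ Mercer form in the claim; gathering the polynomial coefficients that multiply each frequency $\mathbf{b}^*$ defines the weights, which are nonnegative (the kernel is positive semidefinite) and hence writable as $\lambda_i^2$. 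The interpretation of $\cos(\mathbf{b}^{*\top}\mathbf{x})$ and $\sin(\mathbf{b}^{*\top}\mathbf{x})$ as the \emph{eigenfunctions} follows because $k$ depends on $\mathbf{x},\mathbf{z}$ only through $\mathbf{x}-\mathbf{z}$: a stationary kernel is diagonalized by the Fourier basis, so the listed trigonometric modes are eigenfunctions and the collected coefficients $\lambda_i^2$ their eigenvalues.

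I expect the main obstacle to be the combinatorial bookkeeping rather than the analysis: tracking how the lattice $\{\sum_j c_j\mathbf{b}_j\}$ grows as $u$ is raised to successive powers, merging the contributions of different powers that land on the same $\mathbf{b}^*$, and carrying these counts through \cref{lemma:2,lemma:3} to obtain the precise bounds $N^\dagger \le 4Nk^mkm^2$ and $\sum_j|c_j| < N + k^mkm + m$. A secondary subtlety is making the word ``approximation'' precise, namely controlling the tail of the $\arcsin$ series after truncation at order $N$ so that the finite Mercer expansion genuinely approximates the full NTK, and confirming that nonnegativity of the coefficients survives truncation so that the $\lambda_i^2$ notation is justified.
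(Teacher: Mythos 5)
Your proposal follows essentially the same route as the paper's proof: interpret the ReLU NTK as a dot-product kernel on the embedded inputs, use \cref{lemma:4} to turn $\langle\gamma(\mathbf{x}),\gamma(\mathbf{z})\rangle$ into a cosine sum in $\mathbf{x}-\mathbf{z}$, Taylor-expand the $\arccos$ (equivalently your $\arcsin$ series) to order $N$, push the resulting powers and products through \cref{lemma:2,lemma:3} to stay in the frequency lattice, and finish with the angle-addition identity to obtain the $\cos\cos+\sin\sin$ Mercer form. Your normalization remark (that $\|\gamma(\mathbf{x})\|^2=N$ is constant, so the embedded points lie on a sphere) and your flagged subtlety about coefficient nonnegativity justifying the $\lambda_i^2$ notation are in fact handled less carefully in the paper (which introduces the ad hoc factor $\aleph$ and leaves the coefficients as $\beta^\dag_i$ without verifying sign), so your plan is, if anything, slightly more rigorous on those points.
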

\begin{proof}
    By ~\citet{xie2017diverse}, the two-layer MLP's NTK has the form as the following:
    \[
    k(x,z)= \frac{\langle\mathbf{x},\mathbf{z}\rangle(\pi - arccos(\langle\mathbf{x},\mathbf{z}\rangle)}{2\pi}
    \]
    If we use Fourier features mapping, $\gamma(\mathbf{x})$, before inputting to the Neural Network with a randomly sampled frequency set $\{\mathbf{b}_i\}^m_{i=1}$.
    
    By the Lemma \ref{lemma:4}, in order to ensure that the vector dot product still be a valid dot product in $S^{d-1}$, the dot product of two embedded input can be written as $\gamma(\mathbf{x})^{\top}\gamma(\mathbf{z})=\frac{1}{||\gamma(\mathbf{x})||||\gamma(\mathbf{z})||}\sum^m_{i=1}cos(2\pi\mathbf{b}_i(\mathbf{z}-\mathbf{x}))$ to make sure the dot product is bounded by 1.

    \begin{align*}
         k(&\gamma(\mathbf{x}),\gamma(\mathbf{z}))= \frac{\langle\gamma(\mathbf{x}),\gamma(\mathbf{z})\rangle(\pi - arccos(\langle\gamma(\mathbf{x}),\gamma(\mathbf{z})\rangle)}{2\pi}\\
         &\textbf{Denoting $||\gamma(\mathbf{x})||||\gamma(\mathbf{z})||$ as $\aleph$}\\
         &=\frac{\sum^m_{i=1}cos(2\pi\mathbf{b}_i(\mathbf{z}-\mathbf{x}))(\pi - arccos(\frac{1}{\aleph}\sum^m_{i=1}cos(2\pi\mathbf{b}_i(\mathbf{z}-\mathbf{x}))))}{2\pi\aleph}\\
         &\textbf{By N-order approximation Taylor Expansion of arccos($\cdot$)}\\
         &=\frac{1}{2\pi\aleph}(\sum^m_{i=1}cos(2\pi\mathbf{b}_i(\mathbf{z}-\mathbf{x}))\times\\&(\frac{\pi}{2}+\sum^{N}_{k=1}\frac{(2n)!}{2^{2n}} (n!)^2
(\sum^m_{i=1}\frac{1}{\aleph}cos(2\pi\mathbf{b}_i(\mathbf{z}-\mathbf{x})))^k))\\
         &\textbf{By Lemma \ref{lemma:4}}\\
         &=\frac{\sum^m_{i=1}cos(2\pi\mathbf{b}_i(\mathbf{z}-\mathbf{x}))(\frac{\pi}{2}+\sum^{N}_{k=1}\sum^M_{i=1}\beta^*_icos(2\pi\mathbf{b}^*_i(\mathbf{z}-\mathbf{x})))}{2\pi\aleph}\\& \text{where M$\leq k^mkm$ }\\
         &\text{and}\,\,\mathbf{b}^*_i\in \left\{ \mathbf{b}^*=\sum_i^m c_i\mathbf{b_i}\,\vline\, c_i\in\mathbb{Z},\,\sum_i^n|c_i|\leq k\right\}\\
          &=\frac{\sum^m_{i=1}cos(2\pi\mathbf{b}_i(\mathbf{z}-\mathbf{x}))(\frac{\pi}{2}+\sum^{N^*}_{i=1}\beta^*_icos(2\pi\mathbf{b}^*_i(\mathbf{z}-\mathbf{x})))}{2\pi\aleph}\\ &\text{where $N^*\leq 2NM$ }\\
         &\text{and}\,\,\mathbf{b}^*_i\in \left\{ \mathbf{b}^*=\sum_i^m c_i\mathbf{b_i}\,\vline\, c_i\in\mathbb{Z},\,\sum_i^
         n|c_i|\leq N+M\right\}\\
         &=\frac{1}{2\pi\aleph}(\frac{\pi}{2}\sum^m_{i=1}cos(2\pi\mathbf{b}_i(\mathbf{z}-\mathbf{x}))+\sum^m_{i=1}cos(2\pi\mathbf{b}_i(\mathbf{z}-\mathbf{x}))\times\\ &\sum^{N^*}_{i=1}
         \beta^*_icos(2\pi\mathbf{b}^*_i(\mathbf{z}-\mathbf{x}))))\\
         &\textbf{By Lemma \ref{lemma:3}}\\
         &=\frac{\frac{\pi}{2}\sum^m_{i=1}cos(2\pi\mathbf{b}_i(\mathbf{z}-\mathbf{x}))+\sum^{N^{\dag}}_{i=1}
         \beta^\dag_icos(2\pi\mathbf{b}^\dag_i(\mathbf{z}-\mathbf{x})))}{2\pi\aleph}\\ &\text{where $N^\dag\leq 2mN^*$ }\\
         &\text{and}\,\,\mathbf{b}^\dag_i\in \left\{ \mathbf{b}^\dag=\sum_i^m c_i\mathbf{b_i}\,\vline\, c_i\in\mathbb{Z},\,\sum_i^n|c_i|\leq N+M+m\right\}\\
      &=\frac{1}{4\aleph}\sum^m_{i=1}cos(2\pi\mathbf{b}_i(\mathbf{z}-\mathbf{x}))
      \\&+\frac{1}{2\pi\aleph}\sum^{N^{\dag}}_{i=1}
\beta^\dag_icos(2\pi\mathbf{b}^\dag_i(\mathbf{z}-\mathbf{x})))\text{, where $N^\dag\leq 4Nk^mkm^2$}\\
    \end{align*}
 
    Furthermore, to do the eigendecomposition, we need further to split this into the product of two orthogonal functions by $cos(a-b)=cos(a)cos(b) + sin(a)sin(b)$\\
    
    \begin{align*}
                  =&\frac{1}{4\aleph}\sum^m_{i=1}cos(2\pi\mathbf{b}_i\mathbf{x}))cos(2\pi\mathbf{b}_i\mathbf{z}))+sin(2\pi\mathbf{b}_i\mathbf{x})sin(2\pi\mathbf{b}_i\mathbf{z})
          \\
          &+\frac{1}{2\pi\aleph}\sum^{N^{\dag}}_{i=1}
\beta^\dag_icos(2\pi\mathbf{b}^\dag_i\mathbf{x}))cos(2\pi\mathbf{b}^\dag_i\mathbf{z}))+sin(2\pi\mathbf{b}^\dag_i\mathbf{x})sin(2\pi\mathbf{b}^\dag_i\mathbf{z})
    \end{align*}

\end{proof}
This lemma explains why MLPs plus Fourier features embeddings can be interpreted as the linear combination of sinusoidal functions which further provide the evidence of the following lemma \ref{lemma1}.
\begin{theorem}
[Theorem 4.1 in \citet{arora2019fine}] Denoting $\mathbf{u}^{(k)}$ as the prediction of MLPs at iteration $k$, $\lambda$ as the eigenvalue of MLPs, $\mathbf{v}_i$ as the eigenfunction of MLPs, m as the number of neurons in a single layer and $\eta$ as the learning rate. Suppose $\lambda_0 = \lambda_{\min}(\mathbf{H}^\infty) > 0$, 
$\kappa = O\left(\frac{\epsilon \delta}{\sqrt{n}}\right)$, 
$m = \Omega\left(\frac{n^7}{\lambda_0^6 \delta^2 \epsilon^2}\right)$, 
and $\eta = O\left(\frac{\lambda_0}{m}\right)$. Then with probability 
at least $1 - \delta$ over the random initialization, for all $k = 0, 1, 2, \dots$, we have:
\[
\|\mathbf{y} - \mathbf{u}^{(k)}\|_2 = 
\sqrt{\sum_{i=1}^n (1 - \eta \lambda_i)^{2k} (\mathbf{v}_i^\top \mathbf{y})^2} \pm \epsilon.
\]
\label{Thmeorem:brought}
\end{theorem}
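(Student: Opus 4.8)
The plan is to reproduce the NTK-based convergence argument of \citet{arora2019fine}, viewing the squared-loss gradient-descent dynamics of the two-layer ReLU network as an approximately linear recursion driven by a Gram matrix that stays pinned to $\mathbf{H}^\infty$ throughout training. First I would write the finite-width Gram matrix at iteration $k$ as $\mathbf{H}(k)_{ij} = \langle \partial f(\mathbf{x}_i;\mathbf{w}^{(k)})/\partial\mathbf{w},\ \partial f(\mathbf{x}_j;\mathbf{w}^{(k)})/\partial\mathbf{w}\rangle$, expand a single gradient step on the prediction vector $\mathbf{u}^{(k)}$, and use that the loss is quadratic in the residual to obtain
\[
\mathbf{y}-\mathbf{u}^{(k+1)} = (\mathbf{I}-\eta\mathbf{H}(k))(\mathbf{y}-\mathbf{u}^{(k)}) + \boldsymbol{\zeta}^{(k)},
\]
where $\boldsymbol{\zeta}^{(k)}$ collects the $O(\eta^2)$ terms together with the contribution of the ReLU units whose activation pattern on some $\mathbf{x}_i$ flips between steps $k$ and $k+1$.

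Second, I would show that $\mathbf{H}(k)$ is essentially frozen at $\mathbf{H}^\infty$ for the whole trajectory, in two parts. A concentration step shows $\|\mathbf{H}(0)-\mathbf{H}^\infty\|_2$ is small with probability at least $1-\delta$ once $m=\Omega(n^7/(\lambda_0^6\delta^2\epsilon^2))$, via a matrix-Hoeffding bound over the $m$ i.i.d. hidden units. A stability step shows that under $\eta=O(\lambda_0/m)$ the weights never leave a radius-$R$ ball around initialization with $R=O(\sqrt{n}/(\sqrt{m}\,\lambda_0))$; inside that ball only a small fraction of activation patterns can have changed, so $\|\mathbf{H}(k)-\mathbf{H}^\infty\|_2\le\lambda_0/2$ uniformly in $k$. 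I would carry out the stability step by induction on $k$: assuming the residual has contracted like $(1-\eta\lambda_0/2)^{k}$ up to step $k$, I bound the cumulative gradient norm as a geometric series, re-confirm the weights stay in the ball, and close the induction.

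Third, replacing $\mathbf{H}(k)$ by $\mathbf{H}^\infty$ up to the controlled perturbation, I would diagonalize $\mathbf{H}^\infty=\sum_{i=1}^n\lambda_i\mathbf{v}_i\mathbf{v}_i^\top$ and project the linear recursion onto each eigenvector, giving $\mathbf{v}_i^\top(\mathbf{y}-\mathbf{u}^{(k)})\approx(1-\eta\lambda_i)^k\,\mathbf{v}_i^\top(\mathbf{y}-\mathbf{u}^{(0)})$. The small initialization scale $\kappa=O(\epsilon\delta/\sqrt{n})$ forces $\mathbf{u}^{(0)}\approx\mathbf{0}$ with high probability, so the initial residual is $-\mathbf{y}$ up to $O(\epsilon)$; summing the squared projections over the orthonormal eigenbasis gives $\|\mathbf{y}-\mathbf{u}^{(k)}\|_2^2=\sum_i(1-\eta\lambda_i)^{2k}(\mathbf{v}_i^\top\mathbf{y})^2$ up to the error budget, and taking square roots transfers the additive error to the stated $\pm\epsilon$. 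Combining the Fourier-features eigendecomposition of Lemma~\ref{lemma1} then identifies the $\mathbf{v}_i$ with the sine/cosine harmonics, which is what makes the statement relevant to the lower-bound claim.

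The hard part will be the uniform-in-$k$ control of the Gram matrix, i.e. keeping the activation patterns stable for all iterations simultaneously rather than at a single fixed step. This is where overparameterization is genuinely used: the inductive coupling between ``the residual contracts'' and ``the weights stay in the ball'' must be arranged so that neither is presupposed for the step being proved, and the flip-probability bound for ReLU units must be summed over all $n$ points and all $m$ units while the three accumulated error sources — the Gram perturbation, the nonzero initialization, and $\sum_k\boldsymbol{\zeta}^{(k)}$ — remain below $\epsilon$, which is exactly what dictates the polynomial width requirement $m=\Omega(n^7/(\lambda_0^6\delta^2\epsilon^2))$.
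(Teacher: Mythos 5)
Your proposal is correct and takes essentially the same route as the proof this paper relies on: the paper never proves Theorem~\ref{Thmeorem:brought} itself---it is imported verbatim from \citet{arora2019fine}, and the paper's ``proof'' is literally a pointer to Theorem~4.1 there. Your sketch (linearized residual recursion driven by the Gram matrix, concentration of $\mathbf{H}(0)$ around $\mathbf{H}^\infty$, inductive stability of the activation patterns under the stated overparameterization, projection onto the eigenvectors of $\mathbf{H}^\infty$, and small-initialization control of $\mathbf{u}^{(0)}$ via $\kappa$) faithfully reconstructs the structure of that cited proof.
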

\begin{proof}
    Check the proof of Theorem 4.1 in \citet{arora2019fine}
\end{proof}
Notice that the above theorem is based on the full-batch training assumption which is widely used for the image regression task where there is no interpolation requirement.
\begin{lemma}
Let $\mathbf{y}(\mathbf{x}) = \sum_{\mathbf{n}\in\mathbb{Z}^d} \hat{y}_{\mathbf{n}} e^{i \mathbf{n}^\top \mathbf{x}}$ be a $d$-dimensional target function, where $\hat{y}_{\mathbf{n}}$ denotes the Fourier coefficients of $\mathbf{y}(\mathbf{x})$. Consider a pre-sampled frequency set $\mathbf{B}_n = \{\mathbf{b}_i \in \mathbb{Z}^d\}_{i \in [N]}$ and the $L_2$ loss function defined as $\phi(\mathbf{y}, f(\mathbf{x}; \mathbf{W})) = \|f(\mathbf{x}; \mathbf{W}) - \mathbf{y}\|_2$. Let $f(\mathbf{x}; \mathbf{W})$ denotes the MLPs that can be written in the form of sum of sinusoidal functions with frequencies in subspace spanned by $\mathbf{B}_n$ using Neural Tangent Kernel expansion, $\mathbf{y}_\mathbf{B}$ denote the mapping of $\mathbf{y}(\mathbf{x})$ onto the subspace spanned by $\mathbf{B}_n$ which is the same as Neural Tangent Kernel expansion of MLPs, and $\mathbf{y}^\dag_\mathbf{B}$ denote the mapping onto the orthogonal complement, such that $\mathbf{y} = \mathbf{y}_\mathbf{B} + \mathbf{y}^\dag_\mathbf{B}$. Then, with probability at least $1 - \delta$, for all iterations $k = 0, 1, 2, \dots$, the lower bound of the loss function $\phi(\mathbf{y}, f(\mathbf{x}; \mathbf{W}))$ can be expressed as:
\begin{equation*}
    \phi(\mathbf{y}, f(\mathbf{x}; \mathbf{W})) \geq \|\mathbf{y}^\dag_\mathbf{B}\|_2 - \sqrt{\sum_{i}(1 - \eta \lambda_i)^{2k} \langle \mathbf{v}_i, \mathbf{y}_\mathbf{B} \rangle^2} \pm \epsilon,
\end{equation*}
where $\eta$ is the learning rate, $\lambda_i$ are eigenvalues, $\mathbf{v}_i$ are the corresponding eigenvectors, and $\epsilon$ stands for a constant.
\label{lemma:1}
\end{lemma}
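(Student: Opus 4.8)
The plan is to glue together two facts already available: the harmonic characterization of the network's reachable functions from Lemma~\ref{lemma1}, and the per-eigendirection convergence rate of gradient descent in the NTK regime from Theorem~\ref{Thmeorem:brought}. The connective tissue is the orthogonal splitting $\mathbf{y} = \mathbf{y}_\mathbf{B} + \mathbf{y}^\dag_\mathbf{B}$ combined with the reverse triangle inequality. Writing the trained network as $f(\mathbf{x};\mathbf{W}) = \mathbf{u}^{(k)}$, I would first record the purely algebraic step $\|f - \mathbf{y}\|_2 = \|(f - \mathbf{y}_\mathbf{B}) - \mathbf{y}^\dag_\mathbf{B}\|_2 \geq \|\mathbf{y}^\dag_\mathbf{B}\|_2 - \|f - \mathbf{y}_\mathbf{B}\|_2$, which already has the exact shape of the claimed bound, so that the whole proof reduces to controlling the residual term $\|f - \mathbf{y}_\mathbf{B}\|_2$.

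The next step is to justify that the prediction $f$ cannot reach any frequency outside the lattice $\mathcal{L}_{Span\{b_j\}}$ appearing in Lemma~\ref{lemma1}. Because the NTK is a finite sum of $\sin(\mathbf{b}^{*\top}\mathbf{x})$ and $\cos(\mathbf{b}^{*\top}\mathbf{x})$ terms with $\mathbf{b}^* \in \mathcal{L}_{Span\{b_j\}}$, its eigenfunctions are precisely these sinusoids, and the kernel-regression prediction is a linear combination of them. Hence $f$ is supported, in the Fourier sense, inside the subspace spanned by $\mathbf{B}_n$, which is by definition where $\mathbf{y}_\mathbf{B}$ lives. Since $\mathbf{y}^\dag_\mathbf{B}$ is supported entirely on the complementary modes, both $f$ and $\mathbf{y}_\mathbf{B}$ are $L_2$-orthogonal to $\mathbf{y}^\dag_\mathbf{B}$; this orthogonality is what prevents the reverse triangle inequality from being vacuous and pins $\|\mathbf{y}^\dag_\mathbf{B}\|_2$ as an irreducible floor.

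Finally I would control $\|f - \mathbf{y}_\mathbf{B}\|_2$ by applying Theorem~\ref{Thmeorem:brought} to the in-subspace regression problem. The key observation is that the gradient flow driven by the NTK leaves the orthogonal complement $\mathbf{y}^\dag_\mathbf{B}$ entirely untouched, since the kernel carries no spectral mass there, so the only active fitting dynamics are those of the component $\mathbf{y}_\mathbf{B}$. Substituting $\mathbf{y}_\mathbf{B}$ for the target in the theorem then yields $\|f - \mathbf{y}_\mathbf{B}\|_2 = \sqrt{\sum_i (1-\eta\lambda_i)^{2k}\langle \mathbf{v}_i, \mathbf{y}_\mathbf{B}\rangle^2} \pm \epsilon$, with the probability $1-\delta$ and the additive slack $\epsilon$ inherited directly from the overparameterization hypotheses on $m$, $\eta$, and $\lambda_0 = \lambda_{\min}(\mathbf{H}^\infty)$ in that theorem. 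Combining this with the reverse-triangle step produces the stated lower bound.

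The step I expect to be the main obstacle is the rigorous transfer of Theorem~\ref{Thmeorem:brought} from the global target $\mathbf{y}$ to the projected target $\mathbf{y}_\mathbf{B}$. This requires showing that the eigenvectors $\mathbf{v}_i$ of the finite-sample kernel matrix $\mathbf{H}^\infty$ align with the sinusoidal eigenfunctions of the lattice, so that $\langle \mathbf{v}_i, \mathbf{y}_\mathbf{B}\rangle = \langle \mathbf{v}_i, \mathbf{y}\rangle$ and the $\mathbf{y}^\dag_\mathbf{B}$ component contributes nothing to the learnable dynamics. Care is also needed to confirm that replacing the ideal $L_2$ inner product on the continuous domain by the empirical one over the $n$ training samples does not destroy this orthogonality; it is precisely this discretization mismatch that the additive term $\pm\epsilon$ is meant to absorb, and making that absorption quantitative is the delicate part of the argument.
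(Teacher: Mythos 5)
Your proposal follows essentially the same route as the paper's proof: the same orthogonal decomposition $\mathbf{y} = \mathbf{y}_\mathbf{B} + \mathbf{y}^\dag_\mathbf{B}$, the same reverse triangle inequality, the same appeal to Lemma~\ref{lemma1} to place the network's reachable functions (and hence the NTK eigenfunctions $\mathbf{v}_i$) inside the span of $\mathbf{B}_n$, and the same substitution of $\mathbf{y}_\mathbf{B}$ as the effective target in Theorem~\ref{Thmeorem:brought}. The delicate transfer step you flag --- justifying $\langle \mathbf{v}_i, \mathbf{y}\rangle = \langle \mathbf{v}_i, \mathbf{y}_\mathbf{B}\rangle$ so the theorem applies to the projected target --- is precisely what the paper does, though it dispatches it in a single sentence rather than arguing it quantitatively.
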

\begin{proof}
    Given $\mathbf{B}_n = \{\mathbf{b}_i\in\mathbb{N}^d\}_{i\in[N]}$, this can be spanned as a subspace, $\{cos(2\pi\mathbf{b}^\dag\mathbf{x}), sin(2\pi\mathbf{b}^\dag\mathbf{x})|\mathbf{b}^\dag_i\in\{ \mathbf{b}^\dag=\sum_i^m c_i\mathbf{b_i}\,\vline\, c_i\in\mathbb{Z}$, $\,\sum_i^n|c_i|\leq N^\dag\}\}$, in $\mathcal{L}_d$ space, where each item are orthogonal with each other. Therefore, by Orthogonal Decomposition Theorem, $\mathcal{L}_d$ can be decomposed into the space spanned by $\mathbf{B}_n$ and the space orthogonal to this spanned space (or the space spanned by the rest frequencies component).
    
    By using the Fourier series to expand the $y$, this can be decomposed into $y =y^{\dag}_\mathbf{B}+y_\mathbf{B} $ by orthogonal decomposition theorem mentioned before. And each $\mathbf{v}_i$, the eigenfunction of $f(\mathbf{x}; \mathbf{W})$, belongs to the the spanned subspace by $\mathbf{B}_n$ (by Lemma \ref{lemma1}). Therefore, $f(\mathbf{x}; \mathbf{W})$ can be written as a linear combination of Fourier bases and, therefore, cannot fit signals in the orthogonal space of the spanned subspace of $\mathbf{B}_n$ due to orthogonality.
    \begin{align*}
        &\phi(\mathbf{y}, f(\mathbf{x};\mathbf{W})) = ||\mathbf{y}- f(\mathbf{x};\mathbf{W})||_2\\
        &= ||\mathbf{y}^{\dag}_\mathbf{B}+\mathbf{y}_\mathbf{B}- f(\mathbf{x};\mathbf{W})||_2\\
        &\textbf{By using triangular inequality:$||x+y|| - ||x||\leq||\mathbf{y}||$}\\
        &\geq ||\mathbf{y}^{\dag}_\mathbf{B}||_2 -||f(\mathbf{x};\mathbf{W})-\mathbf{y}_\mathbf{B}||_2 
    \end{align*}
    By Theorem \ref{Thmeorem:brought}, with probability $1-\delta$, $\phi(\mathbf{y}, f(\mathbf{x};\mathbf{W})) = \sqrt{\sum(1-\eta\lambda_i)^{2k}\langle\mathbf{v}_i,\,\mathbf{y}\rangle^2}\pm\epsilon$. We can only decompose the latter part and obtain the proposed result.
\end{proof}

\section{Line-search method}
In this section, we firstly make some definition of notations:

\begin{table}[!ht]
\centering
\begin{tabular}{@{}ll@{}}
\toprule
\textbf{Symbol} & \textbf{Explanation} \\ 
\midrule
$\mathbf{X}$     & Dataset pair ($\mathbf{x}$, $\mathbf{y}$) \\
$\theta^t\in\Theta$     &Parameters of MLPs at iteration t\\
$\theta^t_I\in\Theta$     &Parameters of INRs at iteration t \\
$\theta^t_A\in\Theta$     &Parameters of filters at iteration t \\
$\theta^*\in\Theta$     &The optimal parameters of MLPs \\
$\alpha$         & Learning rate \\ 
$\alpha_I$         & Learning rate of INRs\\ 
$\alpha_A$         & Learning rate of filters\\ 
$\phi(\cdot)$  & Loss function that can depend on $\alpha$, \\&$\mathbf{X}$ and $\theta$\\
$\nabla $       & Gradient operator \\ 
$\mathbf{p}^t$ & The update direction at iteration t\\
$\mathbf{p}^t_I$ & The update direction of INRs at iteration t\\
$\mathbf{p}^t_A$ & The update direction of filters at iteration t\\
\bottomrule
\end{tabular}
\label{tab:notations}
\end{table}

Considering a minimization problem as the following:
\begin{equation*}
\begin{aligned} \label{P}
&\theta^* = \min_{\theta\in\Theta}\phi(\theta) \\
\end{aligned}
\end{equation*}
 One common method to find the $\theta^*\in\Theta$ that minimizes $\phi(\cdot)$ is to use the gradient descent method as shown in the Algorithm \ref{GD}.

\begin{algorithm}
\caption{Gradient Descent Algorithm}
\begin{algorithmic}[1] 
    \STATE \textbf{Initialize:} variables $\theta^0$, max iteration $N$, learning rate $\alpha$
    \FOR{$i \gets 1$ to $N$}
        \STATE Calculate the derivative of $\phi(\theta^{t-1})$ about $\theta^{t-1}$ as direction denotes as $\mathbf{p}^t$
        \STATE $\theta^t\gets\theta^{t-1}+\alpha\mathbf{p}^t$
    \ENDFOR
\end{algorithmic}
\label{GD}
\end{algorithm}

Based on this Gradient Descent method, the line-search method is to find the proper learning rate $\alpha_t$ at each iteration by optimization to solve the approximate learning rate or exact learning rate if possible. The algorithm can be shown as the Algorithm \ref{LS}.
\begin{algorithm}
\caption{Line-search Method}
\begin{algorithmic}[1] 
    \STATE \textbf{Initialize:} variables $\theta^0$, max iteration $N$
    \FOR{$i \gets 1$ to $N$}
        \STATE Calculate the derivative of $\phi(\theta^{t-1})$ about $\theta_{t-1}$ as direction denotes as $\mathbf{p}^t$
        \STATE $\alpha=arg\,min_{\alpha}\phi(\theta^{t-1}+\alpha \mathbf{p}^t)$ 
        \STATE $\theta^t\gets\theta^{t-1}+\alpha\mathbf{p}^t$
    \ENDFOR
\end{algorithmic}
\label{LS}
\end{algorithm}
\subsection{Details of Custorm Line-Search Algorithm}
\begin{figure*}[!ht]
    \centering
    \includegraphics[page=1, width=.95\textwidth]{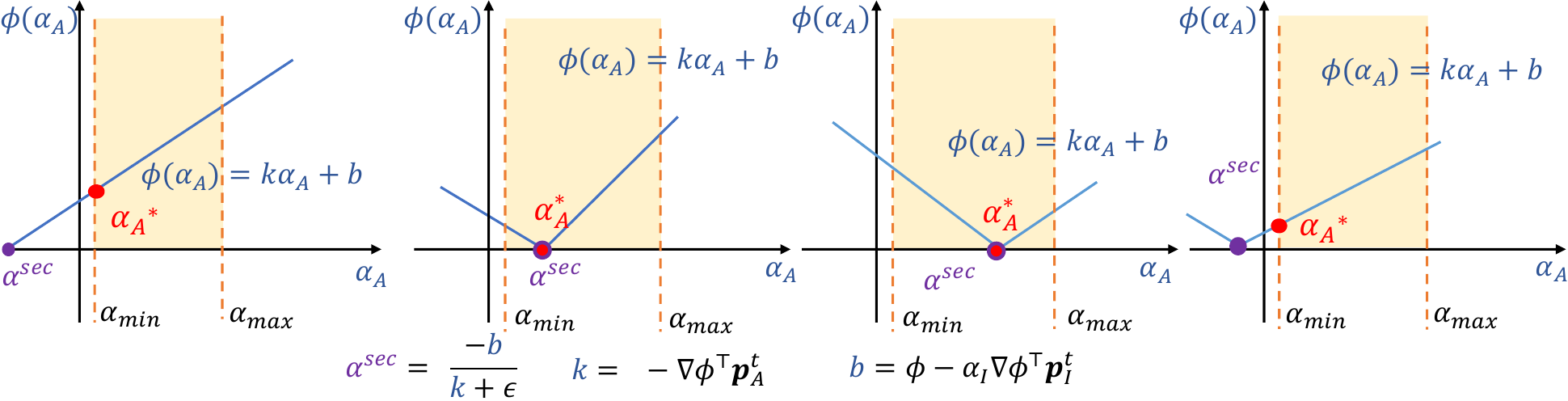}
    \caption{The blue line is the optimization target, while the orange lines indicate the predefined learning rate bounds, denoted as $\alpha_{\text{min}}$ and $\alpha_{\text{max}}$. $\mathbf{p}^t_A$ and $\mathbf{p}^t_I$ are the update directions for the filter and INRs, respectively. $\alpha^*$ is the optimal value and $\epsilon$ is a constant for robustness, usually set to $1 \times 10^{-6}$.
}
    \label{figure:line-search}
\end{figure*}
\label{section:LSA}
In this section, we will explain the derivation of the modified line-search algorithm used to determine the learning rate of the adaptive filter.

Let $\phi(\theta^t_A,\theta^t_I)$ denote the loss function at iteration $t$, $\mathbf{p}^t_A$ as the update direction for the adaptive filter, and $\mathbf{p}^t_I$ as the update direction for the INRs. We can then perform a Taylor expansion around the parameters $(\theta^{t-1}_A, \theta^{t-1}_I)$, expressed as follows:
\begingroup
\scriptsize
\[
\phi(\theta^{t}_A, \theta^{t}_I) = \phi(\theta_A^{t-1}, \theta_I^{t-1}) - \nabla_{\theta_A^t} \phi(\theta_A^{t-1}, \theta_I^{t-1})^\top (\theta_A^t - \theta_A^{t-1})\]\[ - \nabla_{\theta_I^t} \phi(\theta_A^{t-1}, \theta_I^{t-1})^\top (\theta_I^t - \theta_I^{t-1})
\]
\[
+ \frac{1}{2} \left[ (\theta_A^t - \theta_A^{t-1})^2 \triangle_{\theta_A^t} \phi(\theta_A^{t-1}, \theta_I^{t-1}) + (\theta_I^t - \theta_I^{t-1})^2\triangle_{\theta_I^t} \phi(\theta_A^{t-1}, \theta_I^{t-1}) \right]\]
\[+\mathcal{O}((\theta_I^t - \theta_I^{t-1})^2, (\theta_A^t - \theta_A^{t-1})^2)
\]
\endgroup
Using the gradient descent method, we have $\theta^{t} = \theta^{t-1} + \alpha p^{t-1}$. Since the ReLU activation function results in the second and higher-order derivatives being zero, the equation simplifies to:
\begingroup
\scriptsize
\[
\phi(\theta^{t}_A, \theta^{t}_I) \approx \phi(\theta_A^{t-1}, \theta_I^{t-1}) - \nabla_{\theta_A^t} \phi(\theta_A^{t-1}, \theta_I^{t-1})^\top (\alpha_{A}\mathbf{p}^{t-1}_A)\]\[ - \nabla_{\theta_I^t} \phi(\theta_A^{t-1}, \theta_I^{t-1})^\top (\alpha_{I}\mathbf{p}^{t-1}_I)
\]
\[
\Rightarrow\phi(\alpha_A) = \phi(\theta^{t}_A, \theta^{t}_I) \approx k\alpha_A+b
\]
\[\mathbf{where}\,\, k = -\nabla_{\theta_A^t} \phi(\theta_A^{t-1}, \theta_I^{t-1})^\top \mathbf{p}^{t-1}_A\]\[\mathbf{and}\, b =  \phi(\theta_A^{t-1}, \theta_I^{t-1}) - \nabla_{\theta_I^t} \phi(\theta_A^{t-1}, \theta_I^{t-1})^\top (\alpha_{I}\mathbf{p}^{t-1}_I)\]
\endgroup
Since the learning rate of the INRs part is known, this can be simplified as a linear optimization problem with only an order 1 unknown parameter $\alpha^t_A$.
\begingroup
\scriptsize
\begin{equation*}
    \begin{split}
&\arg\min_{\alpha_A} \phi(\theta^{t}_A, \theta^{t}_I) \approx\\ &\arg\min_{\alpha_A} \phi(\theta_A^{t-1}, \theta_I^{t-1}) - \nabla_{\theta_A^t} \phi(\theta_A^{t-1}, \theta_I^{t-1})^\top (\alpha_{A}\mathbf{p}^{t-1}_A)\\
&- \nabla_{\theta_I^t} \phi(\theta_A^{t-1}, \theta_I^{t-1})^\top (\alpha_{I}\mathbf{p}^{t-1}_I)
    \end{split}
\end{equation*}

\endgroup
To mitigate the impact of a small denominator, a constant $\epsilon=1\times10^{-6}$ is introduced, ensuring the robustness of the algorithm. The solution to this optimization problem is derived through a case-based analysis by treating the loss function as a function of $\alpha_A$, since $\theta^{t}_A$ is expressed as a function of $\theta^{t-1}_A$ and $\alpha_A$. Specifically, the loss function, denoted as $\phi(\alpha_A) = \phi(\theta^{t}_A, \theta^{t}_I)$. The analysis involves examining the sign of the slope and intercept of the loss function, as depicted in \autoref{figure:line-search}.

The overall algorithm pipeline is shown in the following Algorithm \ref{alg:1}

\begin{algorithm}[!ht]
\caption{Line-search Method-Relative Learning Rate}
\begin{algorithmic}[1] 
\label{alg:1}
    \STATE \textbf{Initialize:} variables $\theta_0$, max iteration $N$, $\alpha_A$, $\alpha_I$, $\alpha_{max}$, $\alpha_{min}$, $\epsilon$, $c_1$
    \FOR{$i \gets 1$ to $N$}
        \STATE Calculate the update direction as $\mathbf{p}^t_A$ and $\mathbf{p}^t_I$
        \STATE Calculate the partial derivative of $\phi(\theta^{t-1}_A)$ about $\theta^{t-1}_A$ as direction denotes as $\nabla_{\theta^{t-1}_A} \phi$
        \STATE Calculate the partial derivative of $\phi(\theta^{t-1}_I)$ about $\theta^{t-1}_I$ as direction denotes as $\nabla_{\theta^{t-1}_I} \phi$
        \STATE $k\gets-\nabla_{\theta_A^{t-1}} \phi(\theta_A^{t-1}, \theta_I^{t-1})^\top \mathbf{p}^{t-1}_A+\epsilon$
        \STATE $b\gets \phi(\theta_A^{t-1}, \theta_I^{t-1}) -\alpha_I \nabla_{\theta^{t-1}_I} \phi^{\top}\mathbf{p}^t_I$
        \IF{$a\ge0, b\leq0$}
        \STATE $\alpha_A\gets\alpha_{min} $
        \STATE $\mathbf{Armijo}$($c_1, \nabla_{\theta^{t-1}_A} \phi^{\top}\mathbf{p}^t_A,\nabla_{\theta^{t-1}_I} \phi^{\top}\mathbf{p}^t_I$)
        \ELSIF{$a\ge0, b\geq0$ OR $a\le0, b\leq0$}
        \STATE $\alpha_A\gets Clip\left[|\frac{-b}{k}|, \alpha_{min}, \alpha_{max}\right] $
        \STATE $\mathbf{Armijo}$($c_1, \nabla_{\theta^{t-1}_A} \phi^{\top}\mathbf{p}^t_A,\nabla_{\theta^{t-1}_I} \phi^{\top}\mathbf{p}^t_I$)
        \ELSE
        \STATE $\alpha_A\gets\alpha_{min} $
        \STATE $\mathbf{Armijo}$($c_1, \nabla_{\theta^{t-1}_A} \phi^{\top}\mathbf{p}^t_A,\nabla_{\theta^{t-1}_I} \phi^{\top}\mathbf{p}^t_I$)
        \ENDIF
        \STATE $\theta^t_A\gets\theta^{t-1}_A+\alpha_A \mathbf{p}^t_A$
        \STATE $\theta^t_I\gets\theta^{t-1}_I+\alpha_I \mathbf{p}^t_I$
    \ENDFOR
\end{algorithmic}
\end{algorithm}

To ensure sufficient decrease, we also apply a similar derivation based on the Armijo condition, which is commonly used in line-search algorithms to guarantee sufficient decrease. The Armijo condition is typically expressed as follows:
\[
\phi(\theta^{t} + \alpha \mathbf{p}^{t}) \leq \phi(\theta^t) + c_1 \alpha \mathbf{p}^{t\top} \nabla \phi(\theta^{t}),
\]
Where $c_1$ typically takes the value $1 \times 10^{-3}$. By assuming the current step is $t$ (which is equal to the previous derivation's \( t-1 \), but we denote it as \( t \) for simplicity), this can be written as:
\begingroup
\scriptsize
\[
\phi(\theta^{t}_A +\alpha_A \mathbf{p}^t_A, \theta^{t}_I+\alpha_I\mathbf{p}^t_I)\]
\[\leq \phi(\theta^{t}_A, \theta^{t}_I) + c_1 \alpha_A \nabla_{\theta^{t}_A} \phi^{\top}\mathbf{p}^t_A + c_1\alpha_I \nabla_{\theta^{t}_I} \phi^{\top}\mathbf{p}^t_I
\]
\endgroup
Therefore, using a similar Taylor expansion on the loss function with respect to the parameters and algorithm is shown in Algorithm.\ref{alg:armijo}:
\begingroup
\scriptsize
\[
 \phi(\theta_A^{t}, \theta_I^{t}) - \nabla_{\theta_A^t} \phi(\theta_A^{t}, \theta_I^{t})^\top (\alpha_{A}\mathbf{p}^t_A) - \nabla_{\theta_I^t} \phi(\theta_A^{t}, \theta_I^{t})^\top (\alpha_{I}\mathbf{p}^t_I)
 \]
 \[
 \leq \phi(\theta^{t}_A, \theta^{t}_I) + c_1 \alpha_A p_k^\top \nabla_{\theta^{t}_A} \phi^{\top}\mathbf{p}^t_A + c_1\alpha_I \nabla_{\theta^{t}_I} \phi^{\top}\mathbf{p}^t_I
\]
\[\Rightarrow
(c_1-1)\alpha_I\nabla_{\theta^{t}_A}\phi^{\top}\mathbf{p}^t_A \geq (1-c_1)\nabla_{\theta^{t}_I} \phi^{\top}\mathbf{p}^t_I
\]
\endgroup
\begin{algorithm}[!ht]
    \caption{Armijo Condition Check (refered as Armijo)}
\begin{algorithmic}
    \STATE \textbf{Initialize:} variables $c_1$,$\nabla_{\theta^{t-1}_A} \phi^{\top}\mathbf{p}^t_A$, $\nabla_{\theta^{t}_I} \phi^{\top}\mathbf{p}^t_I$, current step t
    \IF{$(c_1-1)\nabla_{\theta^{t}_A} \phi^{\top}\mathbf{p}^t_A > (1-c_1)\nabla_{\theta^{t}_I} \phi^{\top}\mathbf{p}^t_I$}
        \IF{$\nabla_{\theta^{t}_I} \phi^{\top}\mathbf{p}^t_I\times\nabla_{\theta^{t-1}_A} \phi^{\top}\mathbf{p}^t_A>0$}
        \STATE \textbf{return} $\alpha_A\gets\frac{\nabla_{\theta^{t}_I} \phi^{\top}\mathbf{p}^t_I}{\nabla_{\theta^{t}_A} \phi^{\top}\mathbf{p}^t_A}$
        \ELSE
        \STATE \textbf{return} $\alpha_A\gets\alpha_{min}$
        \ENDIF 
    \ELSE        
    \STATE \textbf{return} $\mathbf{None}$
    \ENDIF
\end{algorithmic}
\label{alg:armijo}
\end{algorithm}

\clearpage
\subsection{Visualization of the Final Output of Filters}
\label{sec:visual_output}
In this section, we further present the results of the filtered Positional Encoding embedding as shown in \autoref{fig:sup1} and \autoref{fig:sup2}. Compared to Random Fourier Features, which involve more complex combinations of frequency components, Positional Encoding displays more regular frequency patterns, making it better suited for visualization. These visualizations demonstrate that in low-frequency regions, the high-frequency embeddings are effectively suppressed by the filter, in line with our expectations of the adaptive linear filter's behavior. Additionally, for low-frequency embeddings, the filter can also emphasize high-frequency components, enabling more fine-grained outputs.
\begin{figure}[!h]
    \centering
    \includegraphics[width=0.475\textwidth]{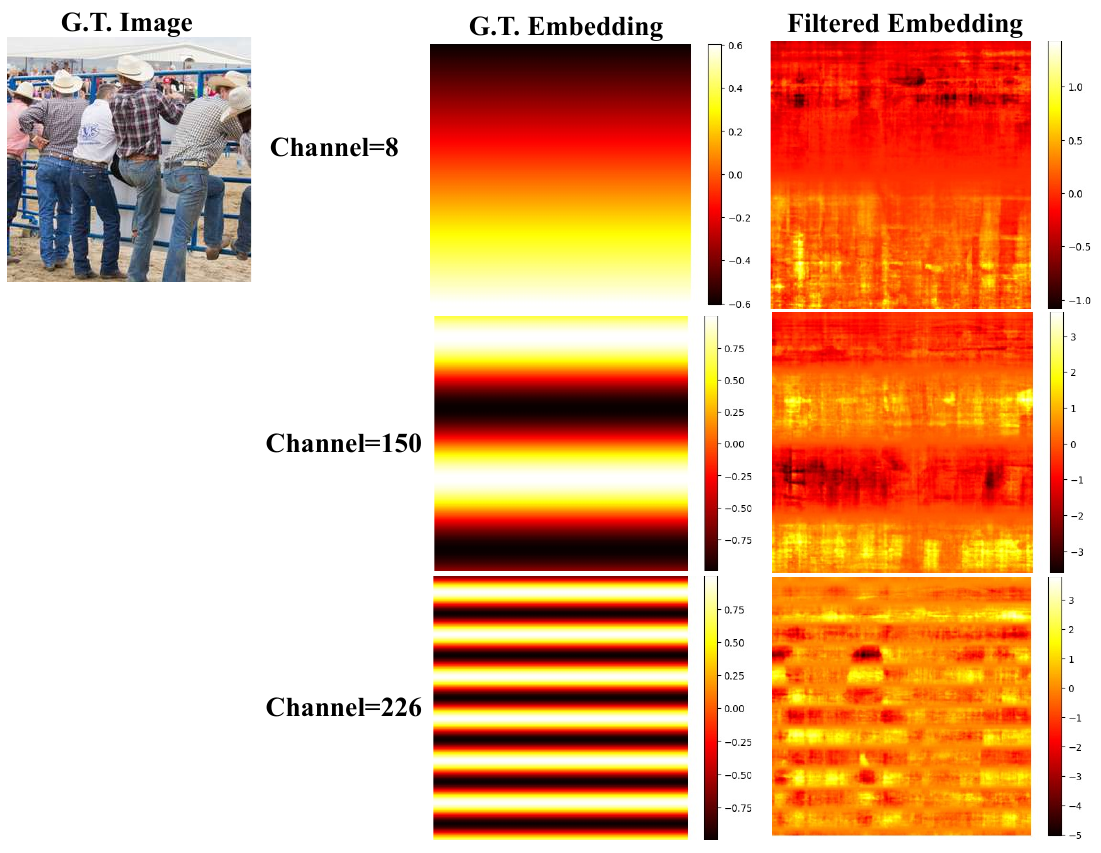}
    \caption{Visualization of the filtered embedding for image 804 in the DIV2K validation split.}
    \label{fig:sup1}
\end{figure}
\begin{figure}[!h]
    \centering
    \includegraphics[width=0.475\textwidth]{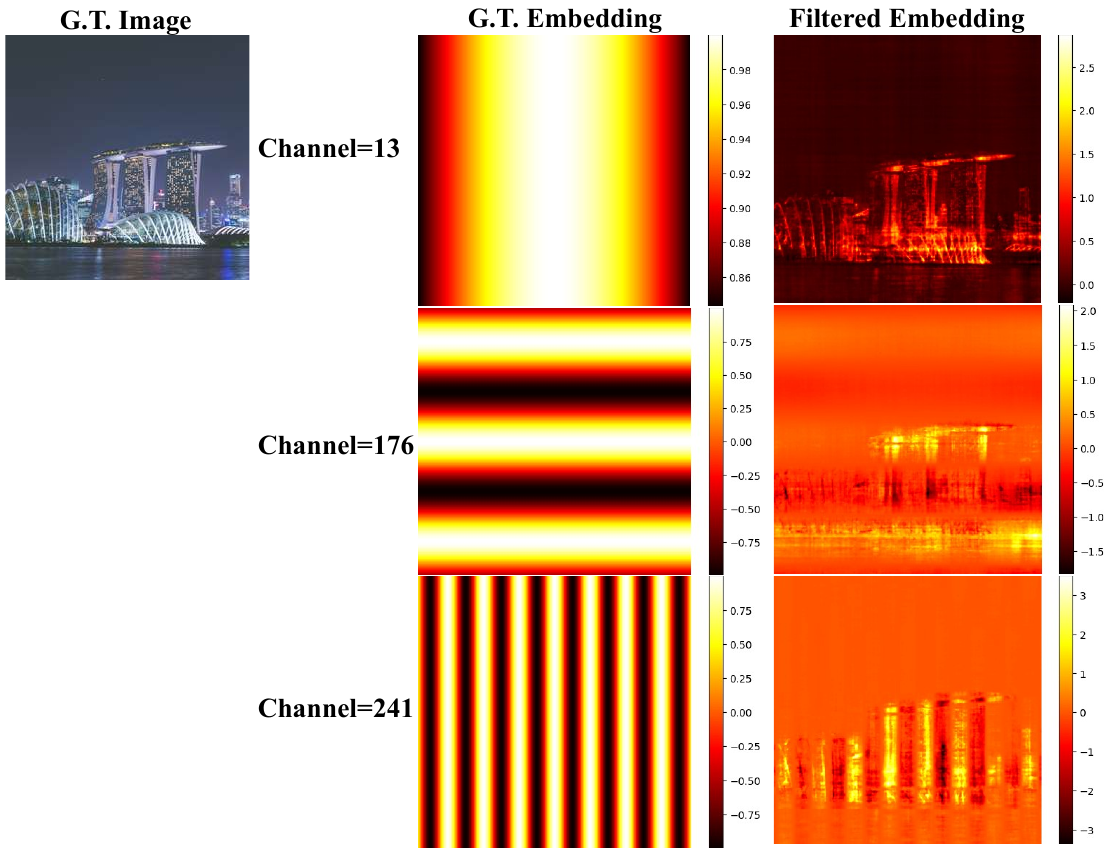}
    \caption{Visualization of the filtered embedding for image 814 in the DIV2K validation split.}
     \label{fig:sup2}
\end{figure}


\end{document}